\def\tsc#1{\csdef{#1}{\textsc{\lowercase{#1}}\xspace}}
\newtheorem{theorem}{Theorem}
\begin{document}
% begin count line number
% \linenumbers  

\let\WriteBookmarks\relax
\def\floatpagepagefraction{1}
\def\textpagefraction{.001}

% Short title
\shorttitle{Adaptive Line-Of-Sight path following for underactuated unmanned surface vehicle}    

% Short author
\shortauthors{Jie Qi et~al.}  

% Main title of the paper
\title [mode = title]{Adaptive Line-Of-Sight guidance law based on vector fields path following for underactuated unmanned surface vehicle}

% Title footnote mark
% eg: \tnotemark[1]
% \tnotemark[1] 

% Title footnote 1.
% eg: \tnotetext[1]{Title footnote text}
% \tnotetext[1]{The research carried out in this paper was partly supported by the National Natural Science Foundation of China(52101346, 62173084, 62106041, 32003162).} 
% First author

% Options: Use if required
% eg: \author[1,3]{Author Name}[type=editor,
%       style=chinese,
%       auid=000,
%       bioid=1,
%       prefix=Sir,
%       orcid=0000-0000-0000-0000,
%       facebook=<facebook id>,
%       twitter=<twitter id>,
%       linkedin=<linkedin id>,
%       gplus=<gplus id>]

% \author[<aff no>]{<author name>}[<options>]

\author[1,2]{Jie Qi}[style=chinese]
\credit{Writing, Original draft preparation, Conceptualization of this study, Methodology}

\author[1,2]{Ronghua Wang}[style=chinese]
\credit{Writing, Original draft preparation, Conceptualization of this study, Methodology}

\author[1,2]{Nailong Wu}[style=chinese,orcid=0000-0003-4869-6361]
\credit{Supervision, Conceptualization of this study, Revision}
\cormark[1]
\ead{nathan_wu@dhu.edu.cn}

\author[1,2]{Yuxin Fan}[style=chinese]
\credit{Formal analysis, Visualization}

\author[1,2]{Jigang Wang}[style=chinese]
\credit{Formal analysis, Visualization}

\address[1]{College of Information Science and Technology, Donghua University, Shanghai 201620, China}

\address[2]{Engineering Research Center of Digitized Textile $\&$ Apparel Technology, Ministry of Education, Donghua University, Shanghai 201620, China}

% \cortext[cor1]{Corresponding author}

% Corresponding author text
\cortext[1]{Corresponding author} 
% Footnote text
%\fntext[1]{First author and second author contributed equally to this work and should be considered co-first authors.}

% For a title note without a number/mark
%\nonumnote{}

% Here goes the abstract
\begin{abstract}
The focus of this paper is to develop a methodology that enables an unmanned surface vehicle (USV) to efficiently track a planned path. The introduction of a vector field-based adaptive line-of-sight guidance law (VFALOS) for accurate trajectory tracking and minimizing the overshoot response time during USV tracking of curved paths improves the overall line-of-sight (LOS) guidance method. These improvements contribute to faster convergence to the desired path, reduce oscillations, and can mitigate the effects of persistent external disturbances. It is shown that the proposed guidance law exhibits $\kappa$-exponential stability when converging to the desired path consisting of straight and curved lines. The results in the paper show that the proposed method effectively improves the accuracy of the USV tracking the desired path while ensuring the safety of the USV work.
\end{abstract}

% Keywords
% Each keyword is seperated by \sep
\begin{keywords}
  Adaptive line-
Of-sight guidance law\sep Vector field\sep Unmanned surface vehicle
\end{keywords}

\maketitle

% Main text

\section{Introduction}
In recent years, unmanned surface vehicles have gained widespread attention for their ability to perform maritime missions more efficiently, cost-effectively, and with greater safety \citep{barrera2021trends, liu2020adaptive}. To effectively accomplish the given task, an unmanned surface vehicle needs to possess autonomous or semi-autonomous navigation capabilities. In this scenario, a guidance system plays a crucial role in guiding the vehicle along a predetermined path, ensuring convergence and stability\citep{Breivik2009, pengusv2021}. However, the tracking performance of USV may be hampered by unknown maritime drift forces or interference caused by ocean currents and time-varying waves \citep{Borhaug2008}. To address this challenge, alternative approaches have been proposed. \citet{Nelson2007} using vector fields around the path to steer the vehicle, or \citet{Caharija2016} using integral LOS steering to mitigate the effects of unknown drift forces. Inspired by the above discussion, this paper considers the static path problem of USVs and investigates the accuracy and stability of USVs tracking the target path. The tracking tasks and tests considered in this paper are mainly realized in a two-dimensional horizontal plane. The main contributions of this paper are as follows:
\begin{itemize}
    \item 
    Based on the idea of vector field, an adaptive line-of-Sight guidance law for unmanned boats is proposed to improve the steering performance of the unmanned boat and provide damping characteristics across the track for error overshooting.
    \item 
    A stability analysis is provided to show that the proposed guidance law is $\kappa$-exponential stable when converging to an ideal path consisting of straight and curved lines.
    \item 
    The effectiveness of the proposed guidance law and its robustness against unpredictable external disturbances are verified through simulations using the robotic operating system ROS as well as on-board lake experiments in a path-following task.
\end{itemize}

\section{USV model}\label{sec2}

In this paper, the actual USV model is employed in the lake experiments of the CCPP algorithm. For this purpose, the USV is modeled, and the controller
of the actual USV model is embedded in the simulation
environment. Fig. \ref{liucheng} illustrates the flowchart of the control strategy for the LOS guidance law.

%流程图
\begin{figure*}
\centering
\includegraphics[width=16cm]{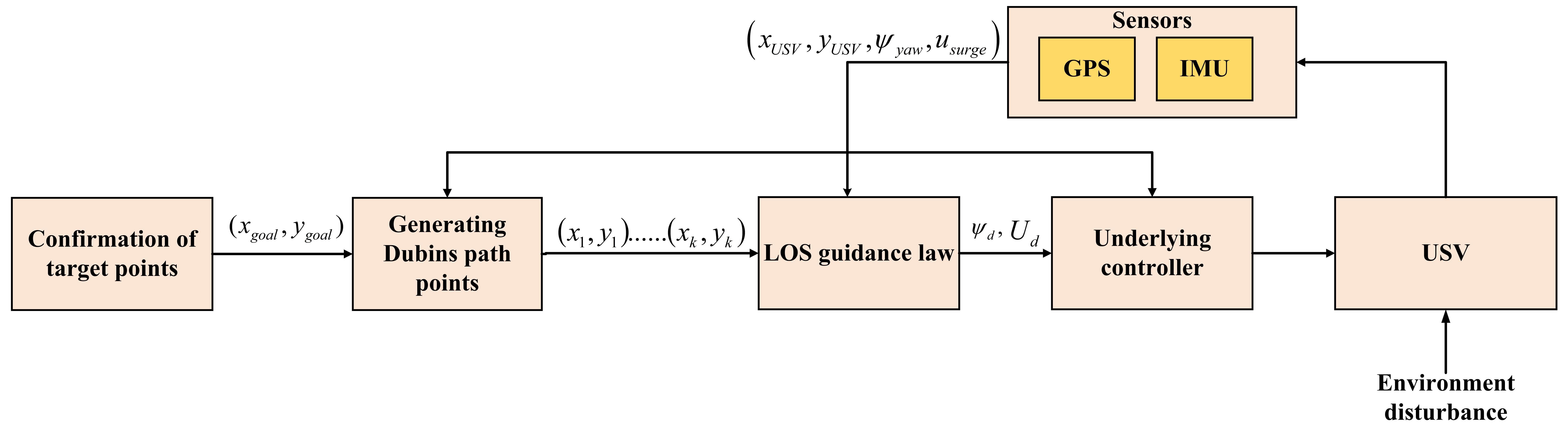}
\caption{Flowchart of the control strategy for the LOS guidance law.}
\label{liucheng}
\end{figure*}

The North-East-Down (NED) coordinate system and the body coordinate system are used to describe the motion, position, and attitude of the USV, respectively \citep{mccue2016handbook}. As shown in Fig. \ref{USVmode}, $({{x}_{1}},{{y}_{1}})$ represents the position of the USV under the NED coordinate system, and $({{x}_{2}},{{y}_{2}})$ denotes the status value in the body coordinate system of the USV. 
$\mathbf{\eta }={{[x,y,\psi ]}^{T}}$ is the state vector, and $\mathbf{v}={{[u,v,r]}^{T}}$ denotes  the velocity vector of the USV. The kinematic model of the USV is given by:
\begin{eqnarray}
&{{\dot{x}}_{1}}=u\cos (\psi )-v\sin (\psi )\\
&{{\dot{y}}_{1}}=u\sin (\psi )+v\cos (\psi )\\
&{\dot{x}}_2=u=({{u}_{1}}+{{u}_{2}})/2\\
&{\dot{y}}_2=v\\
&r=({{u}_{1}}-{{u}_{2}})/a\\
&U=\sqrt{{{u}^{2}}+{{v}^{2}}}
\end{eqnarray}
where $\dot{(\cdot)}$ denotes the differentiation with respect to time. $a$ is the distance between the two propellers. $u$ and $v$ represent the velocity values of the USV in the body coordinate system along the $x_2$ and $y_2$ axes, respectively. $\psi$ is the yaw angle, and $r$ means the yaw rate. $U$ means the speed value of the USV. $u_1$ and $u_2$ represent the velocity values of the left propeller and the right propeller, respectively.

\begin{figure}
    \centering  \includegraphics[width=5cm]{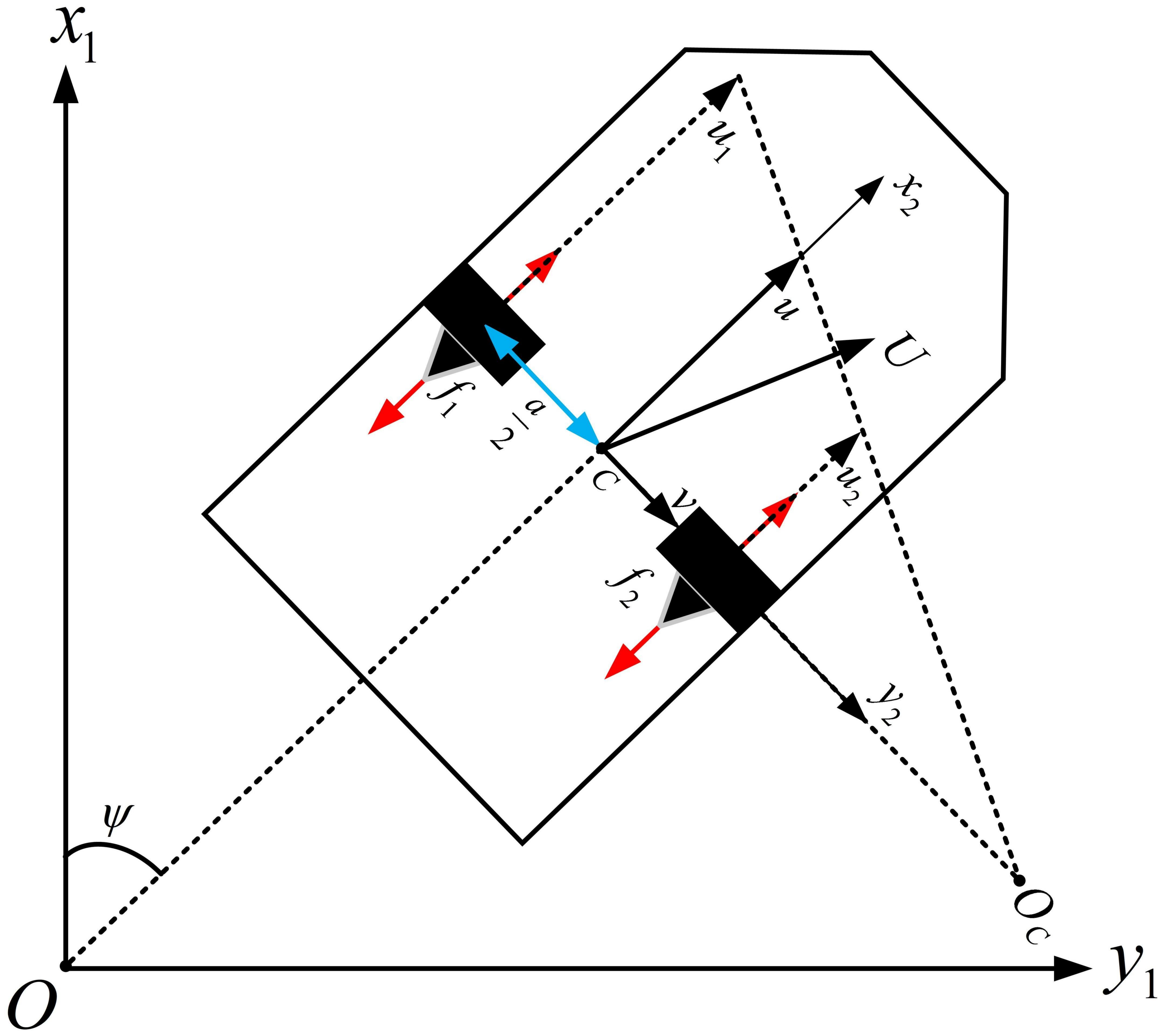}
    \caption{3-DOF vessel centered at $C$ in the North-East-Down reference frame.}
    \label{USVmode}
\end{figure}

Assuming that the USV operates as a rigid body, its dynamical equations can be expressed as follows:
\begin{eqnarray}
&M\mathbf{\dot{v}}+\mathbf{C(v)v}+\mathbf{D(v)v}=\mathbf{\tau }\\
&M={{M}_{RB}}+{{M}_{A}}
\end{eqnarray}
where ${M}_{RB}$ represents the mass of the USV, and ${M}_{A}$  denotes the added mass of the USV due to the resistance of the water during accelerated navigation. $\mathbf{C(v)}$ means the drag matrix generated by the USV when it is rotating in water.  $\mathbf{D(v)}$ signifies the damping matrix, the friction damping force generated by the USV in motion, which is mainly related to the surface area of the unmanned ship.  $\tau ={{[{{\tau }_{u}},{{\tau }_{r}}]}^{T}}$ is the thrust matrix of the propeller with the following expression:
\begin{eqnarray}
&{{\tau }_{u}}={{f}_{1}}+{{f}_{2}}
\label{tuili}\\
&{{\tau }_{r}}=\frac{a}{2}({{f}_{1}}-{{f}_{2}})\\
&\mathbf{\tau }=\mathbf{Bu}=\left[ \begin{matrix}
   1 & 1  \\
   a/2 & -a/2  \\
\end{matrix} \right]\left[ \begin{matrix}
   {{f}_{1}}  \\
   {{f}_{2}}  \\
\end{matrix} \right]
\end{eqnarray}
where $\mathbf{B}$ is the propulsion configuration of the thrusters and $\mathbf{u}={{[{{f}_{1}},{{f}_{2}}]}^{T}}$ denotes the thrust vector of the two propellers. Note that the $\mathbf{B}$ is defined in the controller firmware.

The USV's main controller, the Pixhawk, is equipped with the Ardupilot Boat Firmware, which integrates the USV's motion control system. The output results calculated by the PID behavior controller are distributed according to the thruster mechanics influence coefficient matrix to control the motor speed.

\section{Vector field-based adaptive LOS guidance law}

The actual tracking of path planning depends on the performance of the guidance control algorithm. Therefore, it is necessary to design an appropriate guidance law to ensure the stability of USV \citep{liu2016predictor}.
Fig. \ref{fig1} describes the geometry of the path LOS guidance law and some variables involved \citep{wan2020improved, qiu2020predictor}. Some of the most critical variables include:
\begin{eqnarray}
&{{y}_{e}}=-(x-{{x}_{p}})\sin ({{\gamma }_{p}})+(y-{{y}_{p}})\cos ({{\gamma }_{p}})\\
&{{\psi }_{d}}={{\gamma }_{p}}-\arctan \left( \frac{{{y}_{e}}}{\Delta } \right) - \beta 
\end{eqnarray}
where ${{y}_{e}}$ represents the cross-track error, ${{\psi }_{d}}$ is the desired heading angle, and $\beta $ signifies the sideslip angle, and $\Delta $ represents the look-ahead distance, which determines how aggressively the USV steers. Besides, ${{\gamma }_{p}}$ names the path tangential angle.

When the sideslip angle changes rapidly due to time-varying environmental disturbances,  the adaptive LOS guidance law can directly compensate for the unknown sideslip angle to ensure good tracking capacity \citep{yuan2022study, du2023improved}. The adaptive LOS guidance law proposed by \citet{fossen2023adaptive} is given by:
\begin{eqnarray}
&{{\psi }_{d}}={{\gamma }_{p}}-\hat{\beta }-\arctan \left( \frac{{{y}_{e}}}{\Delta } \right)
\label{VFALOS1}\\
&\dot{\hat{\beta }}=\gamma \frac{\Delta {{y}_{e}}}{\sqrt{{{\Delta }^{2}}+y_{e}^{2}}}
\label{VFALOS2}
\end{eqnarray}
where $\gamma$ represents the adaptation gain, and $\hat{\beta }$ is the parameter estimate of ${\beta }$. The parameter estimation error for the sideslip angle is $\tilde{\beta }=\beta -\hat{\beta }$. When integral control is used instead of parameter adaptation, $\hat{\beta }\equiv 0$ and therefore $\tilde{\beta }=\beta $. Assume that $\beta$ is small and constant throughout the path tracing,  
then we have $\tilde{\beta }=\beta \approx 0$, $\dot{\beta }=0$ and $\dot{\tilde{\beta }}\approx -\dot{\hat{\beta }}$.

% \begin{figure}
%     \centering
%     \includegraphics[width=7cm]{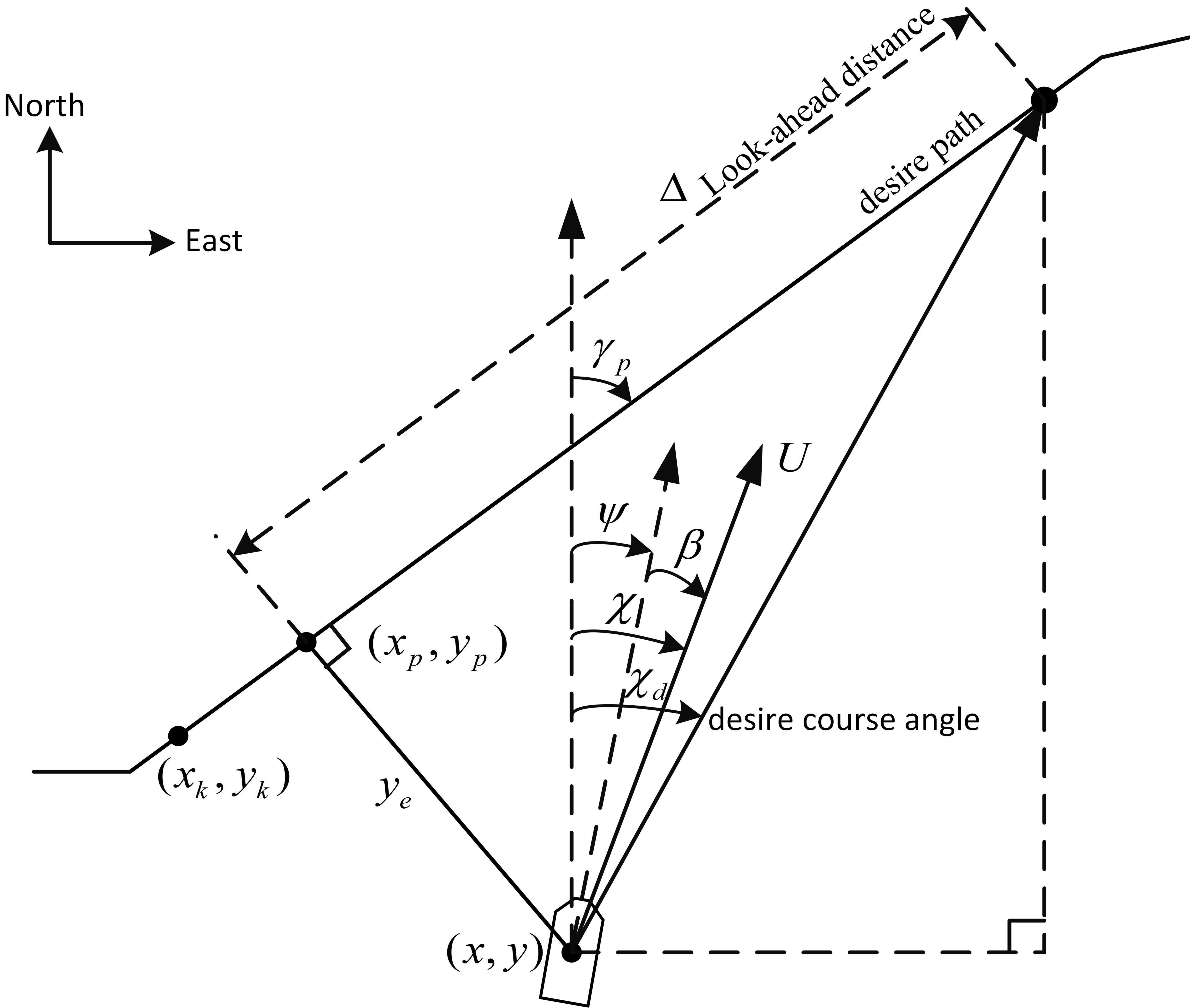}
%     \caption{Geometry of the LOS guidance law.}
%     \label{fig1}
% \end{figure}
% \begin{figure}
%     \centering
%     \includegraphics[width=5cm]{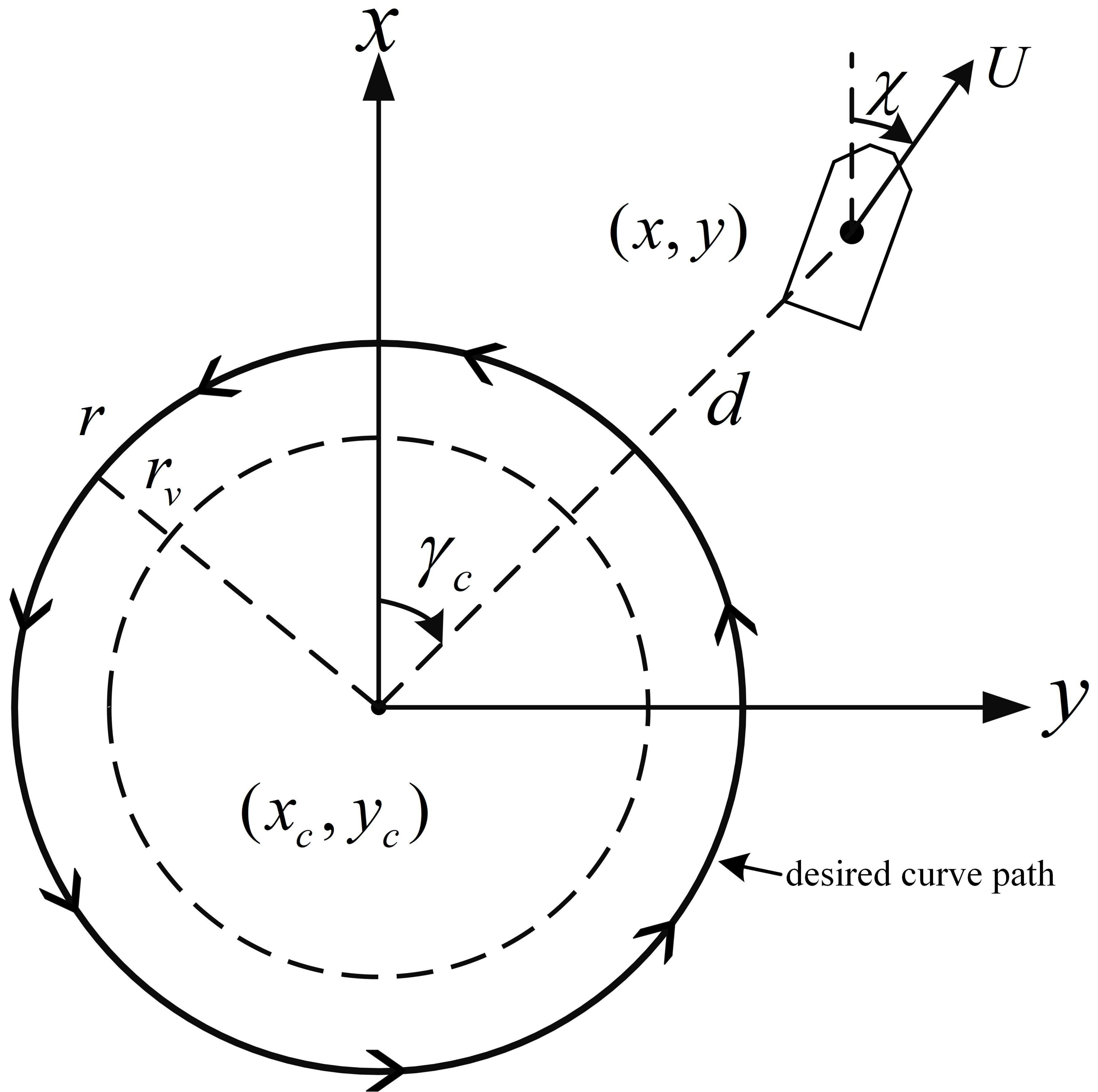}
%     \caption{Vector field geometry for curved route tracking.}
%     \label{curve-LOS}
% \end{figure}

\begin{figure}


    \centering
    \begin{subfigure}[b]{0.5\linewidth}
        \centering
        \includegraphics[width=7cm]{fig3.pdf}
        \caption{Geometry of the LOS guidance law.}
        \label{fig1}
    \end{subfigure}%
    \begin{subfigure}[b]{0.5\linewidth}
        \centering
        \includegraphics[width=5cm]{fig4.pdf}
        \caption{Vector field geometry for curved route tracking.}
        \label{curve-LOS}
    \end{subfigure}
    \caption{Schematic diagram of the guidance law.}
    \label{Schematic diagram of the guidance law}
\end{figure}
USV is prone to overshooting when tracking a curved path in actual water. It is easier to bring more oscillations when the USV tracks the desired path after a turn. \citet{nelson2007vector} proposed the vector field path tracking method, which has good performance in tracking turn paths. Therefore, this paper proposes to introduce a vector field in the adaptive LOS guidance law to reduce the overshoots and oscillations of USV when tracking curved paths. As shown in Fig. \ref{curve-LOS}, $y_e$ can be denoted as ${{y}_{e}}\approx d-r$ and ${\gamma }_{p}$ can be denoted as ${{\gamma }_{p}}\approx {{\gamma }_{c}}-\pi /2$. ${\gamma }_{c}$ is the azimuth of the USV relative to the center of the desired curve path circle. The vector field-based adaptive LOS guidance law  proposed in this paper is used for USV tracking curve paths and is defined as follows:
\begin{eqnarray}
&{{\psi }_{d}}={{\gamma }_{c}}-\frac{\pi }{2}-\hat{\beta }-\arctan (\frac{d-{{r}_{v}}}{\Delta })\
\label{VFALOS3}\\
&\dot{\hat{\beta }}=\gamma \frac{\Delta (d-{{r}_{v}})}{\sqrt{{{\Delta }^{2}}+{{(d-{{r}_{v}})}^{2}}}}
\label{VFALOS4}
\end{eqnarray}
where $d$ is the distance between the current position of the USV and the center of the curved path. ${r}_{v}$  represents the vector field radius, which is related to the path radius $r$ as follows:
\begin{eqnarray}
{{r}_{v}}=\arctan (kr)\frac{2 }{\pi }{{r}_{\min }}+(r-{{r}_{\min }})
\end{eqnarray}
where $k$ and ${r}_{min}$ are the designed parameters. When the radius r of the path is not constant, it can be calculated according to the following equation:
\begin{eqnarray}
r=\frac{\sqrt{{{({{x}_{p+1}}-{{x}_{p}})}^{2}}+{{({{y}_{p+1}}-{{y}_{p}})}^{2}}}}{2\sin \left( ({{{\gamma }_{p+1}}-{{\gamma }_{p}})}/{2}\; \right)}
\label{r}
\end{eqnarray}
where $(x_p,y_p)$ is the point on the desired path closest to the USV. $({{x}_{p+1}},{{y}_{p+1}})$ represents the coordinates of the expected next point on the path, and its tangent angle for the expected path is ${\gamma }_{p+1}$.

\begin{theorem}[Globally $\kappa$-Exponentially Stable VFALOS Guidance Law]\label{th}

In the face of constant or time-varying environmental disturbances, the nominal system's point $d=r$ is globally $\kappa $-exponentially stable if the intended heading angle is represented by Eq. \eqref{VFALOS3} and the time derivative of the parameter estimation is provided by Eq. \eqref{VFALOS4}.
\end{theorem}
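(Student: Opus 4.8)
\noindent\emph{Sketch of proof.} The plan is to follow the Lyapunov route that is standard for adaptive LOS guidance: restrict to the \emph{nominal} guidance/adaptation subsystem, prove it is globally asymptotically stable, and then upgrade this to the $\kappa$-exponential property by a linearisation argument. First I would fix the inner heading loop to be ideal, $\psi\equiv\psi_d$, so that the only remaining states are the cross-track error $e:=d-r_v$ (with $e=0$ the on-path condition of the theorem) and the estimation error $\tilde\beta=\beta-\hat\beta$. Using the course-angle kinematics $\dot y_e=U\sin(\psi+\beta-\gamma_p)$ implied by the model of Section~\ref{sec2}, the geometry $y_e\approx d-r$, $\gamma_p\approx\gamma_c-\pi/2$ of Fig.~\ref{curve-LOS}, the guidance law \eqref{VFALOS3}--\eqref{VFALOS4}, and $\dot\beta=0$ with $r_v$ constant on a path segment, I would reduce the closed loop to
\begin{equation*}
  \dot e=\frac{U\bigl(\Delta\sin\tilde\beta-e\cos\tilde\beta\bigr)}{\sqrt{\Delta^{2}+e^{2}}},\qquad
  \dot{\tilde\beta}=-\,\frac{\gamma\,\Delta\,e}{\sqrt{\Delta^{2}+e^{2}}}.
\end{equation*}

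Next I would take the Lyapunov function candidate $V=\tfrac12 e^{2}+\tfrac{1}{2\gamma}\tilde\beta^{2}$. Differentiating along the error dynamics and using $\cos(\arctan(e/\Delta))=\Delta/\sqrt{\Delta^{2}+e^{2}}$ and $\sin(\arctan(e/\Delta))=e/\sqrt{\Delta^{2}+e^{2}}$, the estimator term and $e\dot e$ combine to give $\dot V=-\tfrac{U\cos\tilde\beta}{\sqrt{\Delta^{2}+e^{2}}}\,e^{2}+\tfrac{\Delta e}{\sqrt{\Delta^{2}+e^{2}}}\bigl(U\sin\tilde\beta-\tilde\beta\bigr)$; invoking the small-sideslip assumption stated just before the theorem (under which $U\sin\tilde\beta-\tilde\beta$ is higher-order near the origin, or, after normalising the surge speed, vanishes) yields $\dot V\le-\tfrac{U\cos\tilde\beta}{\sqrt{\Delta^{2}+e^{2}}}\,e^{2}\le0$ for $|\tilde\beta|<\pi/2$ and $U>0$. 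All trajectories are therefore bounded, and LaSalle's invariance principle (with $U$ constant, or a uniform/Matrosov argument for a time-varying but strictly positive $U$) confines them to the largest invariant subset of $\{\dot V=0\}=\{e=0\}$; on $e=0$ the $e$-equation forces $\sin\tilde\beta=0$, so the only invariant set is the origin. Hence $(e,\tilde\beta)=(0,0)$, i.e. $d=r$, is (uniformly) globally asymptotically stable.

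To obtain the exponential part, I would linearise the error dynamics at the origin, getting $\dot e=U\tilde\beta-(U/\Delta)\,e$, $\dot{\tilde\beta}=-\gamma e$, whose system matrix has trace $-U/\Delta<0$ and determinant $\gamma U>0$ and is thus Hurwitz; the origin is uniformly locally exponentially stable. Uniform global asymptotic stability together with uniform local exponential stability is exactly what is required to conclude $\kappa$-exponential stability of $d=r$, i.e. there exist $\lambda>0$ and a class-$\mathcal K$ function $\alpha(\cdot)$ with $\|(e(t),\tilde\beta(t))\|\le\alpha\bigl(\|(e(t_0),\tilde\beta(t_0))\|\bigr)e^{-\lambda(t-t_0)}$; this settles the constant-disturbance case. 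For a time-varying disturbance, $\dot\beta\neq0$ adds a term $\tfrac1\gamma\tilde\beta\dot\beta$ to $\dot V$, which a bounded $|\dot\beta|$ lets one dominate by Young's inequality to obtain uniform ultimate boundedness (input-to-state stability with respect to $\dot\beta$), with the $\kappa$-exponential estimate recovered when $\dot\beta\to0$ --- which is why the analysis only needs $\beta$ small and asymptotically constant.

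The step I expect to be the main obstacle is the sign analysis of $\dot V$: the cross term $\tfrac{\Delta e}{\sqrt{\Delta^{2}+e^{2}}}\bigl(U\sin\tilde\beta-\tilde\beta\bigr)$ is genuinely indefinite and does not disappear for the adaptation law \eqref{VFALOS4} as written unless the surge speed is normalised (or folded into $\gamma$) or $\tilde\beta$ is kept small, so stating the \emph{global}, rather than merely semi-global, conclusion cleanly requires care. A secondary difficulty is that with a time-varying $U$ the plain quadratic $V$ delivers decay of $e$ but not obviously \emph{exponential} decay of $\tilde\beta$ near the origin; a weighted Lyapunov function carrying an $e\tilde\beta$ cross term, or a separate cascade/ISS argument for the $\tilde\beta$ channel, would be needed to make the local exponential --- hence $\kappa$-exponential --- estimate fully rigorous.
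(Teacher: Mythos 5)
Your proposal follows essentially the same route as the paper: reduce to the nominal cross-track/estimation-error subsystem, pick a quadratic Lyapunov function, establish UGAS plus ULES, and invoke the Lefeber-type result that UGAS together with ULES gives global $\kappa$-exponential stability. The one substantive difference is the Lyapunov function: you take $V=\tfrac12 e^{2}+\tfrac{1}{2\gamma}\tilde\beta^{2}$, which leaves the indefinite cross term $\tfrac{\Delta e}{\sqrt{\Delta^{2}+e^{2}}}\bigl(U\sin\tilde\beta-\tilde\beta\bigr)$ that you flag as the main obstacle, whereas the paper uses $V=\tfrac12(d-r_v)^{2}+\tfrac{U}{2\gamma}\tilde\beta^{2}$ (Eq.~\eqref{2020}) and applies the small-angle approximation $\sin\tilde\beta\approx\tilde\beta$, $\cos\tilde\beta\approx1$ \emph{before} forming $\dot V$; with the extra factor $U$ in the $\tilde\beta^{2}$ weight, the adaptation law \eqref{VFALOS6} cancels the cross term exactly, so your anticipated difficulty dissolves by construction (at the price of an approximation valid only for small $\tilde\beta$, so the ``global'' claim inherits that caveat). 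Conversely, two of your worries are points the paper simply glosses over: its $\dot V\approx-U(d-r_v)^{2}/\sqrt{\Delta^{2}+(d-r_v)^{2}}$ is only negative \emph{semi}definite in the pair $(e,\tilde\beta)$ (it vanishes on the whole set $d=r_v$ irrespective of $\tilde\beta$), so the asserted UGAS genuinely needs the LaSalle/Matrosov step you supply; and the subsequent bound $\dot V\le-M(d-r_v)^{2}$ on a compact set likewise controls only the $e$-channel, so your linearisation (trace $-U/\Delta$, determinant $\gamma U$, Hurwitz) is a cleaner and more defensible way to get ULES of the full $(e,\tilde\beta)$ state. In short, your sketch is the same argument but more candid about its weak points; adopting the paper's $U$-weighted Lyapunov function would let you delete your ``main obstacle'' discussion, while your LaSalle and linearisation steps are additions the paper's own proof would benefit from.
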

\begin{proof}
According to \citet{wang2023vector}, the time derivative of ${y}_{e}$ is the nominal system  $\Sigma$, which is given by:
\begin{eqnarray}
{{\dot{y}}_{e}}&=&U\sin (\psi -{{\gamma }_{p}}+\beta )\nonumber\\
&=&U\sin (\chi -{{\gamma }_{p}})
\label{mingyi}
\end{eqnarray} 
Eqs. \eqref{VFALOS3} and \eqref{VFALOS4} can be rewritten as:
\begin{eqnarray}
{{\psi }_{d}}={{\gamma }_{p}}-\hat{\beta }-\arctan \left( \frac{{{y}_{e}}+r-{{r}_{v}}}{\Delta } \right)
\label{VFALOS5}
\end{eqnarray}
\begin{eqnarray}
\dot{\hat{\beta }}=\gamma \frac{\Delta ({{y}_{e}}+r-{{r}_{v}})}{\sqrt{{{\Delta }^{2}}+{{({{y}_{e}}+r-{{r}_{v}})}^{2}}}}
\label{VFALOS6}
\end{eqnarray}
Substituting Eq. \eqref{VFALOS5} into Eq. \eqref{mingyi}:
\begin{eqnarray}
\begin{aligned}
{{{\dot{y}}}_{e}}=&U\sin (\psi -{{\gamma }_{p}}+\beta )\\
=&U\sin \left( \tilde{\beta }-\arctan \left( \frac{{{y}_{e}}+r-{{r}_{v}}}{\Delta } \right) \right) \\ 
=&U\sin \tilde{\beta }\cos \left( \arctan \left( \frac{{{y}_{e}}+r-{{r}_{v}}}{\Delta } \right) \right) \\ 
&-U\cos \tilde{\beta }\sin \left( \arctan \left( \frac{{{y}_{e}}+r-{{r}_{v}}}{\Delta } \right) \right) \\ 
\approx& U\tilde{\beta }\cos \left( \arctan \left( \frac{{{y}_{e}}+r-{{r}_{v}}}{\Delta } \right) \right) \\ 
&-U\sin \left( \arctan \left( \frac{{{y}_{e}}+r-{{r}_{v}}}{\Delta } \right) \right)
\end{aligned}
\label{tuidao}
\end{eqnarray}
$\sin (\arctan (x))$ and $\cos (\arctan (x))$ can be rewritten as the following equations:
\begin{eqnarray}
\begin{aligned}
& \sin (\arctan (x))=x/\sqrt{{{x}^{2}}+1} \\ 
& \cos (\arctan (x))=1/\sqrt{{{x}^{2}}+1}  \\
\end{aligned}
\label{sanjiaozhuanhuan}
\end{eqnarray}
Substituting Eq. \eqref{sanjiaozhuanhuan} into Eq. \eqref{tuidao}:
\begin{eqnarray}
{{\dot{y}}_{e}}=\frac{\tilde{\beta }\Delta U}{\sqrt{{{\Delta }^{2}}+{{({{y}_{e}}+r-{{r}_{v}})}^{2}}}}-\frac{({{y}_{e}}+r-{{r}_{v}})U}{\sqrt{{{\Delta }^{2}}+{{({{y}_{e}}+r-{{r}_{v}})}^{2}}}}
\label{dotye}
\end{eqnarray}
Setting the Lyapunov function candidate(LFC):
\begin{eqnarray}
V&=&\frac{1}{2}{{(d-{{r}_{v}})}^{2}}+\frac{U}{2\gamma }{{\tilde{\beta }}^{2}}\nonumber\\
&\approx& \frac{1}{2}{{({{y}_{e}}+r-{{r}_{v}})}^{2}}+\frac{U}{2\gamma }{{\tilde{\beta }}^{2}}
\label{2020}
\end{eqnarray}
Eq. \eqref{dotye} is substituted into the time derivative of the LFC to yield:
\begin{eqnarray}
&\dot{V}\approx ({{y}_{e}}+r-{{r}_{v}}){{\dot{y}}_{e}}+\frac{U}{\gamma }\tilde{\beta }\dot{\tilde{\beta }}\nonumber\\
&=\frac{({{y}_{e}}+r-{{r}_{v}})\tilde{\beta }\Delta U}{\sqrt{{{\Delta }^{2}}+{{({{y}_{e}}+r-{{r}_{v}})}^{2}}}}-\frac{{{({{y}_{e}}+r-{{r}_{v}})}^{2}}U}{\sqrt{{{\Delta }^{2}}+{{({{y}_{e}}+r-{{r}_{v}})}^{2}}}}+\frac{U}{\gamma }\tilde{\beta }\dot{\tilde{\beta }}
\label{dotV}
\end{eqnarray}
Since $\dot{\tilde{\beta }}\approx -\dot{\hat{\beta }}$, substituting Eq. \eqref{VFALOS6} into Eq. \eqref{dotV}:
\begin{eqnarray}
&\dot{V}\approx -\frac{U{{({{y}_{e}}+r-{{r}_{v}})}^{2}}}{\sqrt{{{\Delta }^{2}}+{{({{y}_{e}}+r-{{r}_{v}})}^{2}}}}\nonumber\\
&=-\frac{U{{(d-{{r}_{v}})}^{2}}}{\sqrt{{{\Delta }^{2}}+{{(d-{{r}_{v}})}^{2}}}}
\end{eqnarray}
where $U>0$, so it is negatively definite. Therefore, $d=r$ is the uniform global asymptotic stability (UGAS) equilibrium for system \eqref{mingyi}. For $D=\{d\in R,\left| d-r \right|\le \xi \}$, $\xi >0$, there is when $0<M<U/\sqrt{{{\Delta }^{2}}+{{(\xi +r-{{r}_{v}})}^{2}}}$:
\begin{eqnarray}
\dot{V}\le -U\frac{{{(d-{{r}_{v}})}^{2}}}{\sqrt{{{\Delta }^{2}}+{{(\xi +r-{{r}_{v}})}^{2}}}}\le -M{{(d-{{r}_{v}})}^{2}}
\end{eqnarray}
This means that $d=r$ is uniform local exponential stability (ULES) equilibrium for system \eqref{mingyi}. According to \citet{lefeber2000tracking}, the combination of UGAS and ULES implies global $\kappa $-exponential
stability.
\end{proof}

The desired velocity of USV for tracking straight and curved paths is different to ensure tracking accuracy. The velocity distribution during USV's path tracking is given by the following equation:
\begin{eqnarray}
{{U}_{d}}=\max \left( {{U}_{\max }}(1-\frac{|{{y}_{e}}|}{{{y}_{\max }}}-\frac{|\tilde{\chi }|}{{{\chi }_{\max }}})+{{U}_{\min }}, {{U}_{\min }} \right)
\end{eqnarray}
where ${U}_{d}$ means the desired velocity, and the course angle error $\tilde{\chi}$ is: $\tilde{\chi }=\chi -{{\chi }_{d}}$. ${y}_{\max }$ is the maximum allowable cross-track error, and ${\chi }_{\max }$ represents the maximum allowable course angle error, both of which are design parameters. ${U}_{max}$ and ${U}_{min}$ denote the minimum and maximum values of the speed setting, respectively. 

%节点图
\begin{figure}
    \centering  \includegraphics[width=9cm]{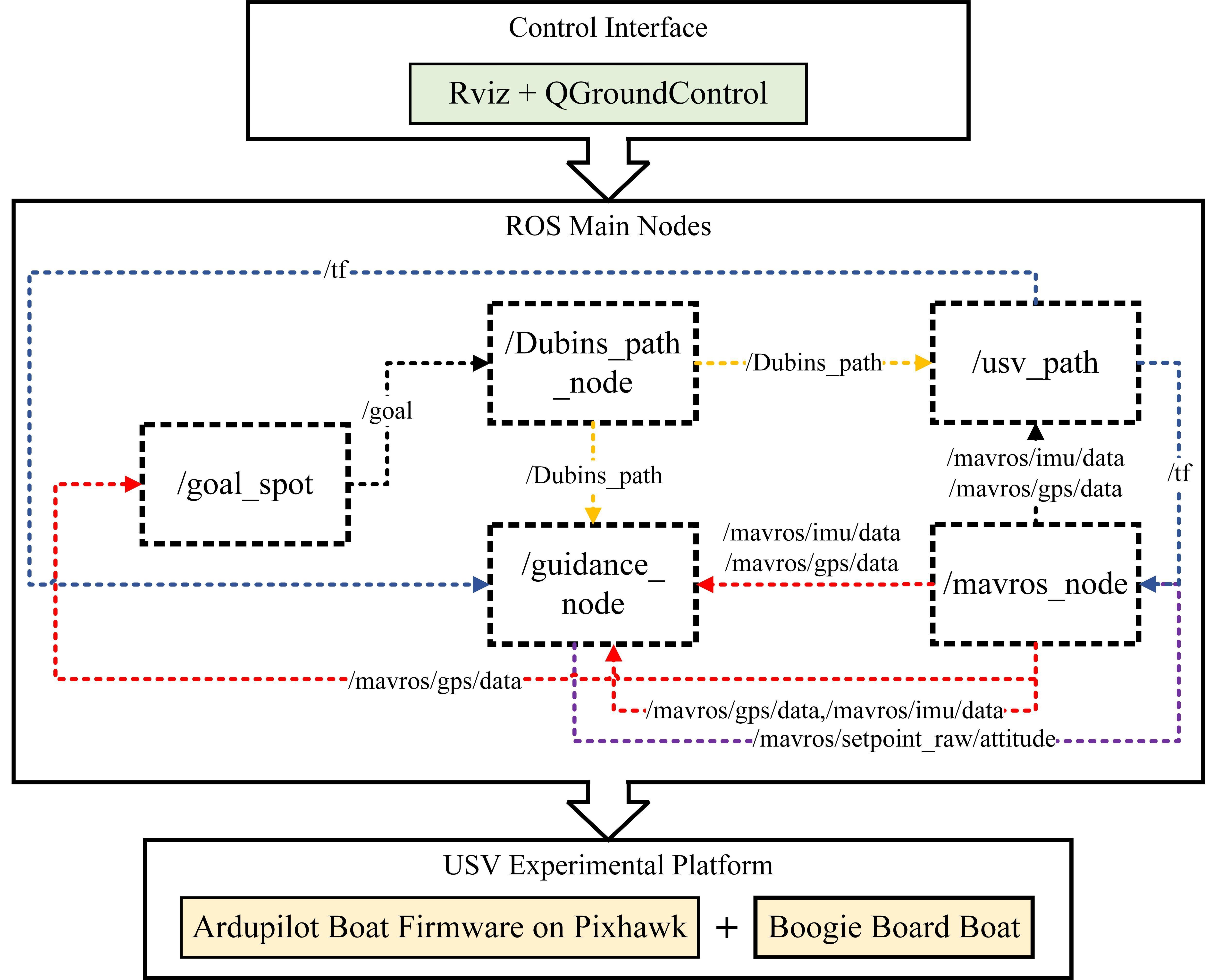}
    \caption{The implementation of the USV control system.}
    \label{ros_node}
\end{figure}

\section{Results and discussions}\label{sec4}

To verify that the VFALOS method is better, its performance is compared to that of the vector field-based integral LOS guidance law (abbreviated as VFILOS) in the reference \cite{wang2023vector} and the time-varying forward-looking distance LOS guidance law (abbreviated as TLOS) in the reference \cite{lenes2019autonomous}.  The USV control nodes are displayed in Fig. \ref{ros_node}. Fig. \ref{LOSfangzhen} illustrates that VFALOS performs better when it comes to tracing.

According to the above experimental comparisons, the VFALOS control law as a path tracking algorithm has better anti-interference ability and tracking accuracy.

%加图2 los实验制导图
\begin{figure*}
\centering
    \subfloat[TLOS\label{TLOS_path1}]
    {
     \includegraphics[width=5.5cm]{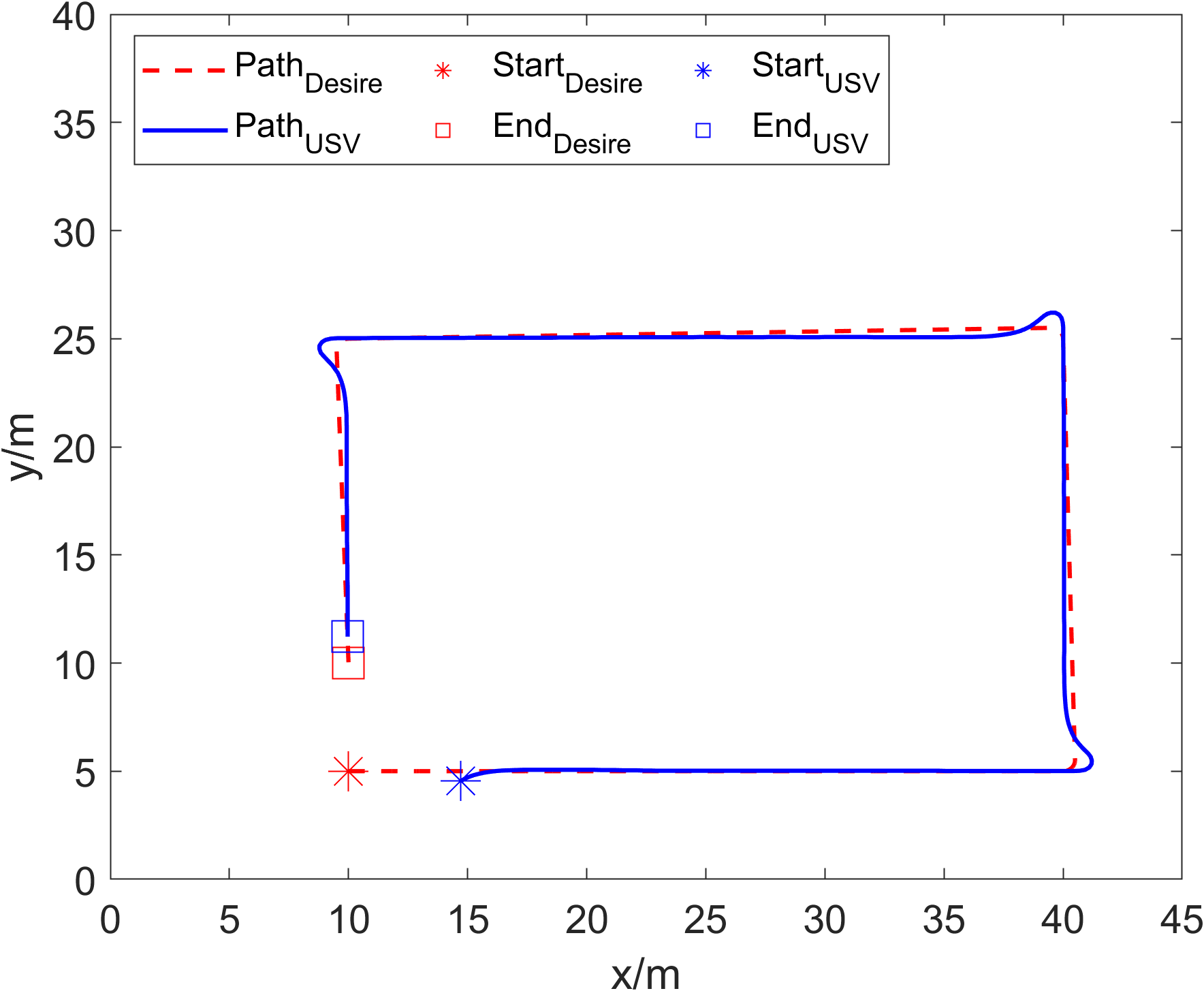}
    }
    \subfloat[VFILOS\centering\label{VFILOS_path1}]
    {
     \includegraphics[width=5.5cm]{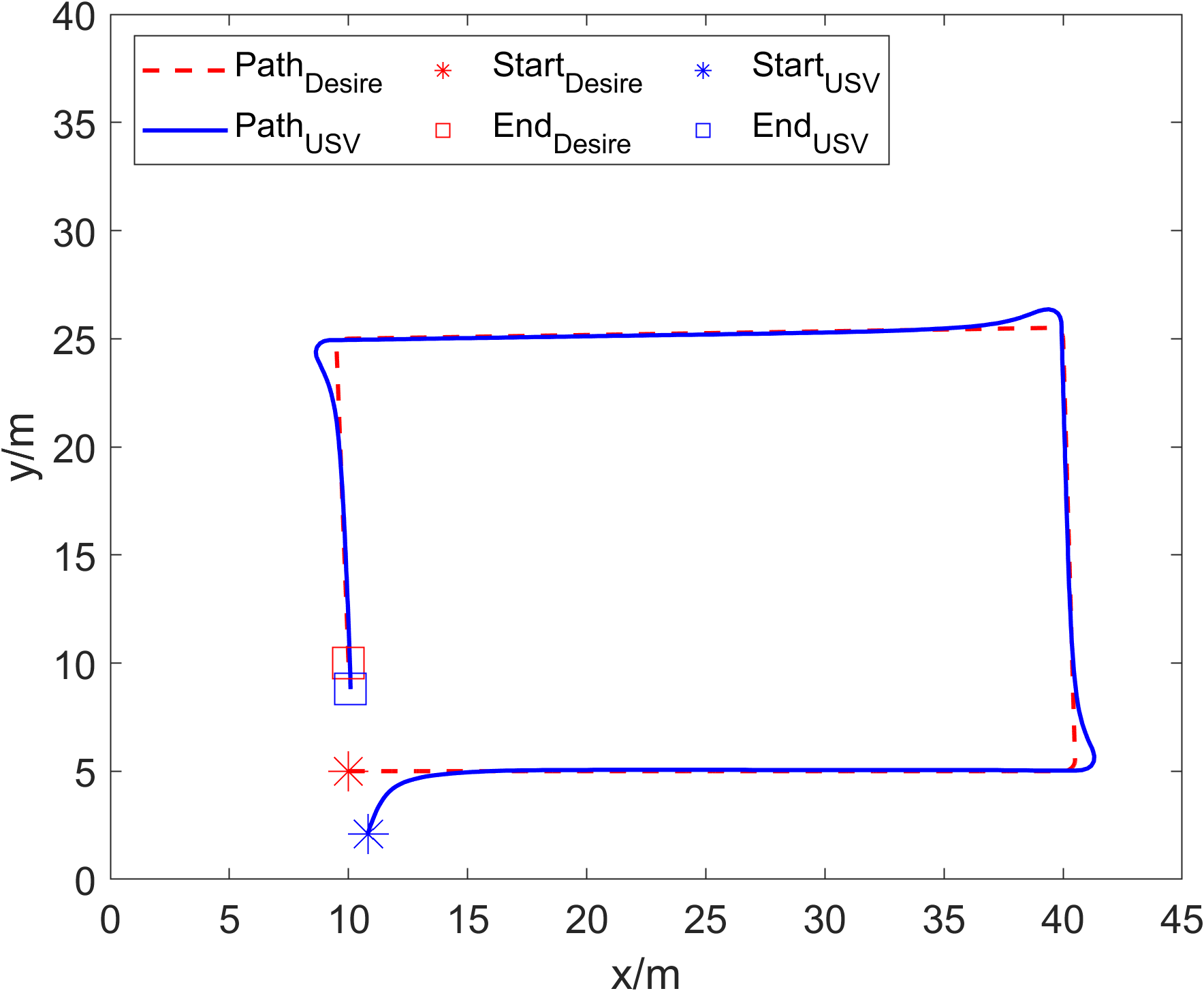}
    }
    \subfloat[VFALOS\centering\label{VFALOS_path1}]
    {
     \includegraphics[width=5.5cm]{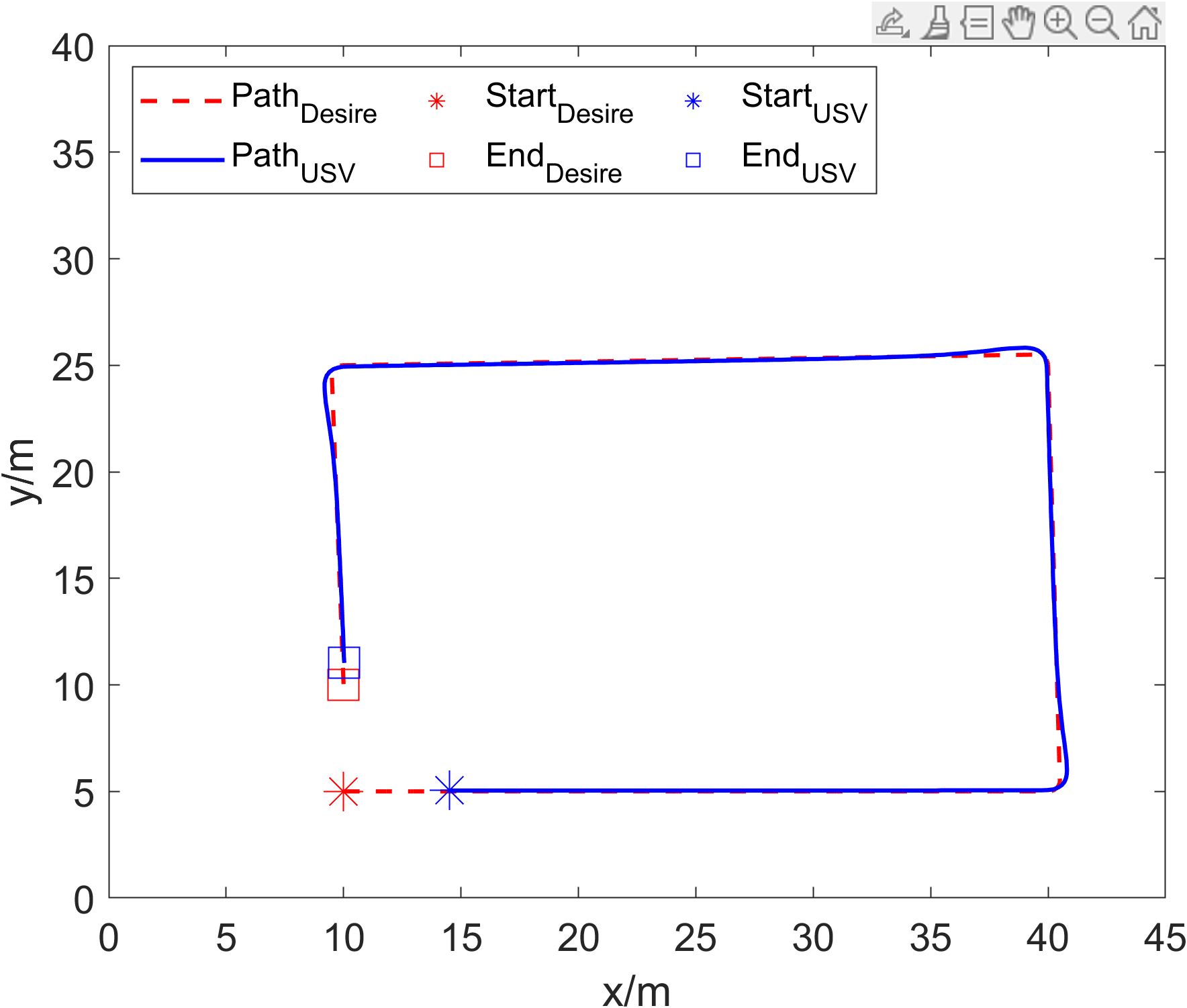}
    }
    \quad
    \subfloat[TLOS\label{TLOS_path2}]
    {
     \includegraphics[width=5.5cm]{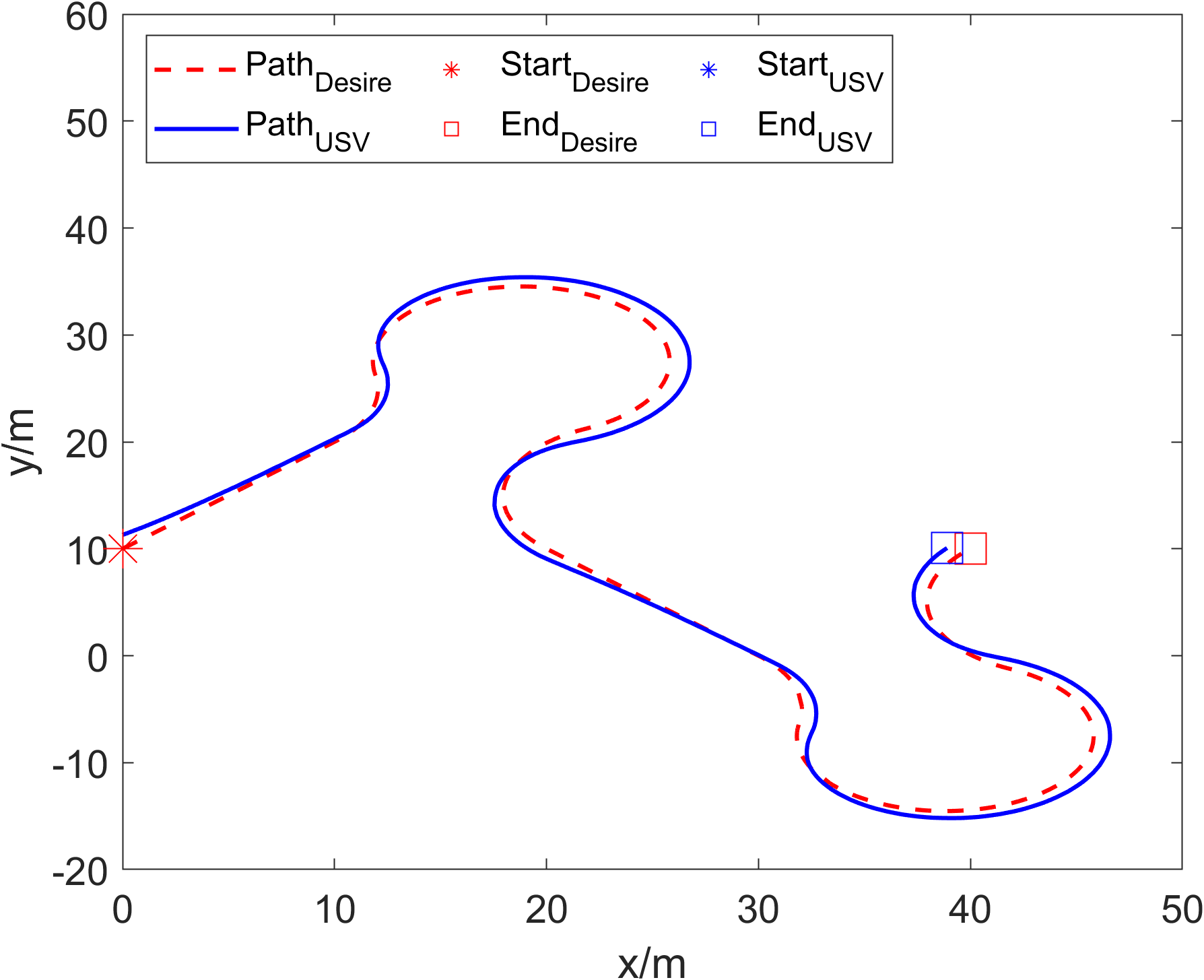}
    }
    \subfloat[VFILOS\centering\label{VFILOS_path2}]
    {
     \includegraphics[width=5.5cm]{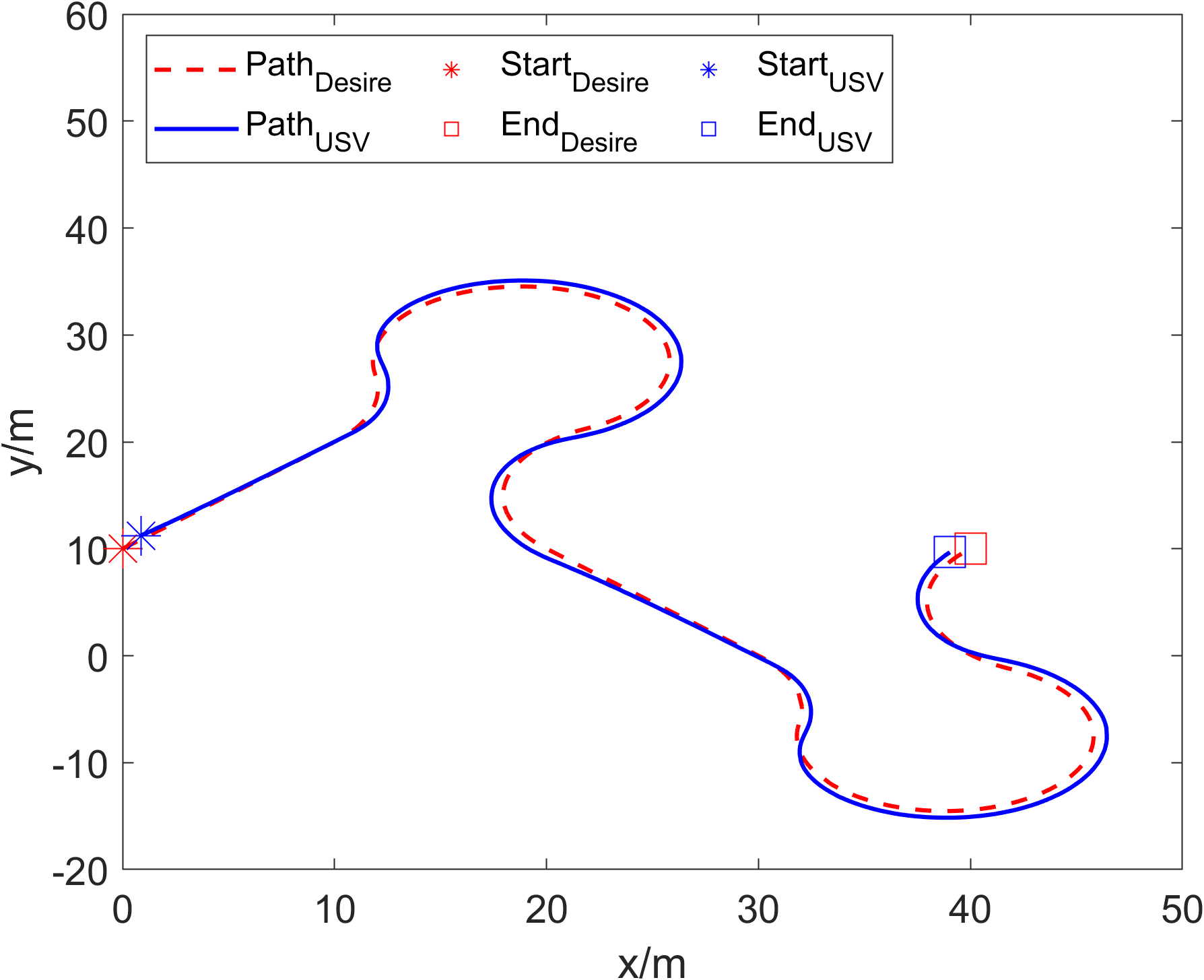}
    }
    \subfloat[VFALOS\centering\label{VFALOS_path2}]
    {
     \includegraphics[width=5.5cm]{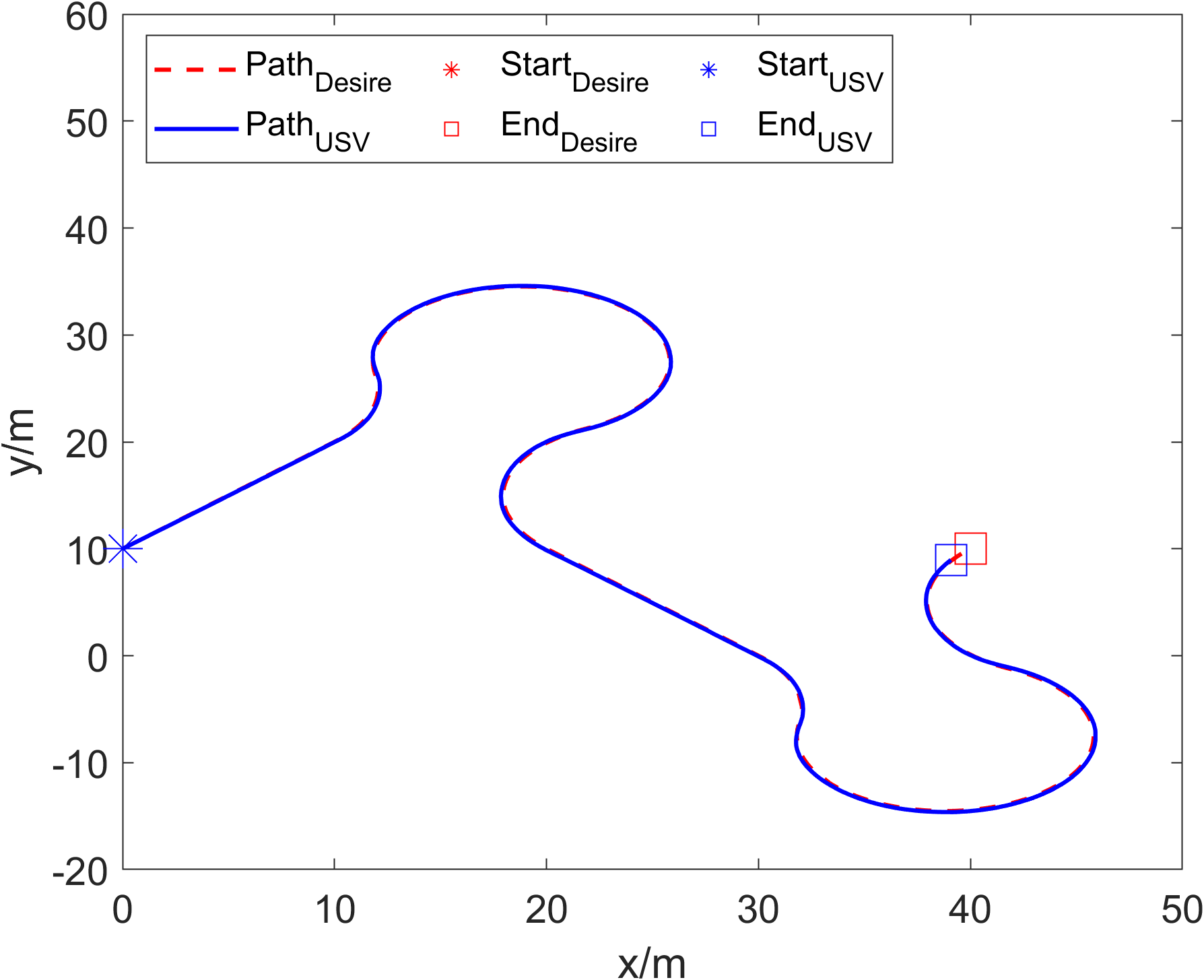}
    }
    \quad
    \subfloat[TLOS\label{TLOS_path3}]
    {
     \includegraphics[width=5.5cm]{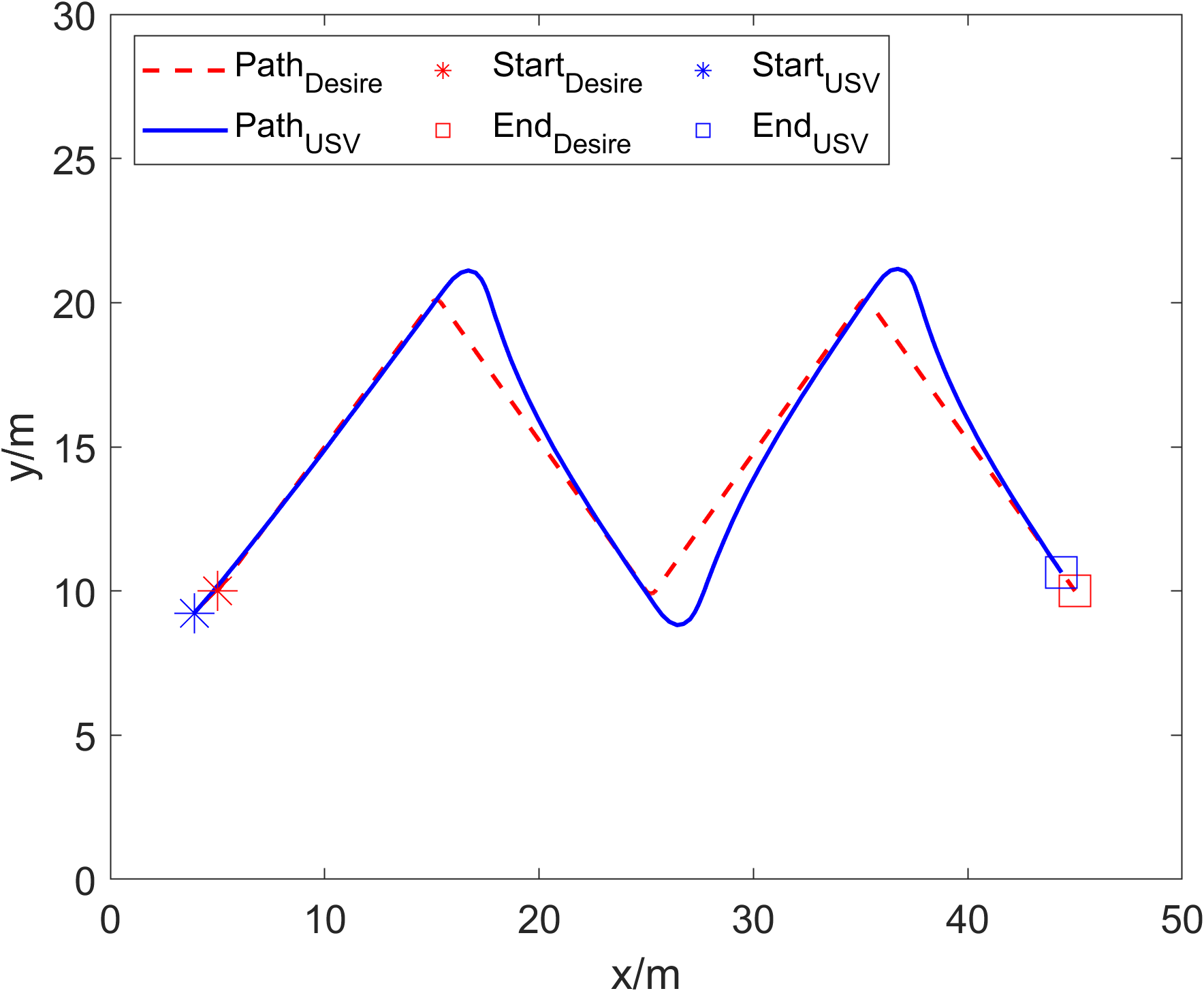}
    }
    \subfloat[VFILOS\centering\label{VFILOS_path3}]
    {
     \includegraphics[width=5.5cm]{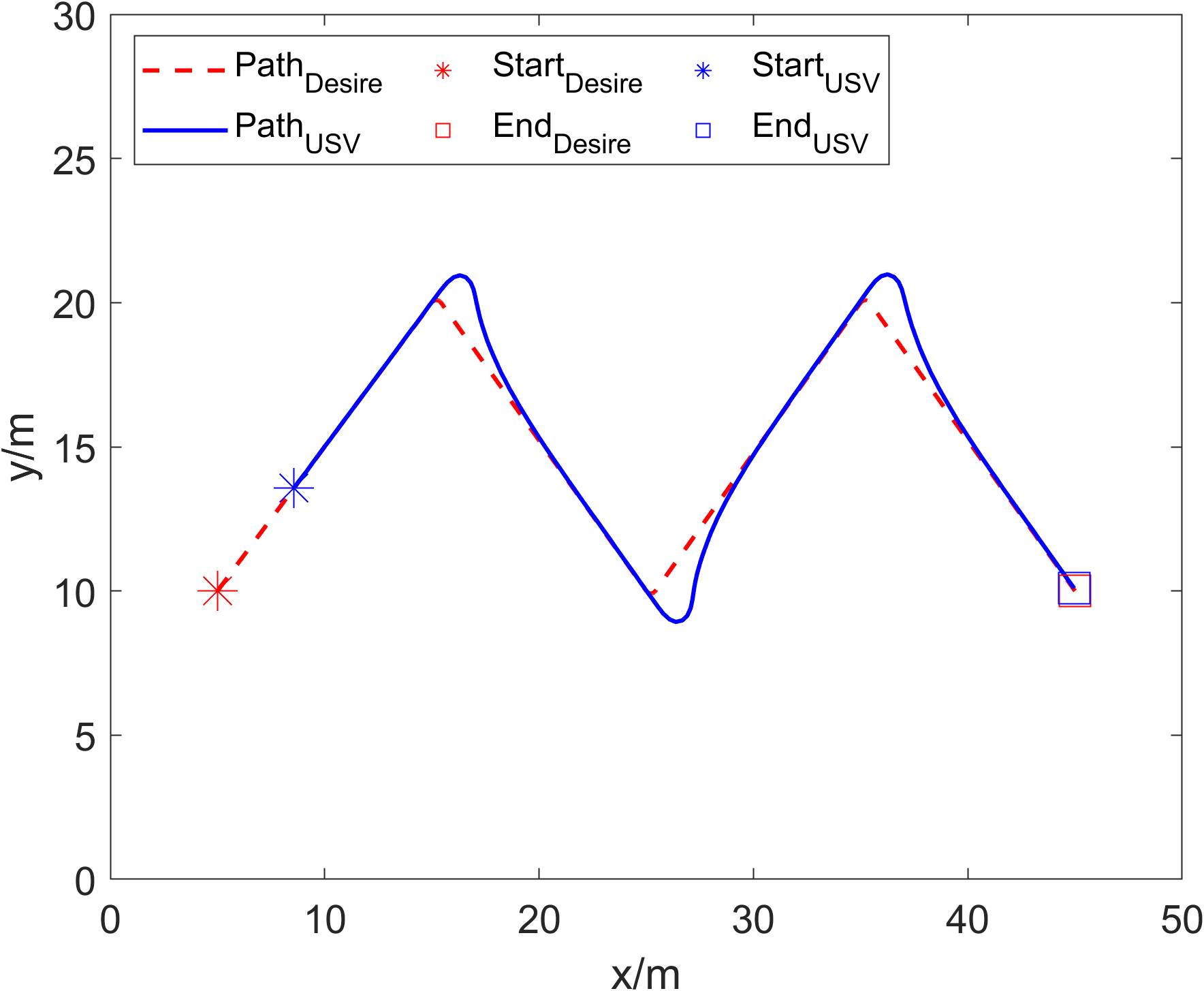}
    }
    \subfloat[VFALOS\centering\label{VFALOS_path3}]
    {
     \includegraphics[width=5.5cm]{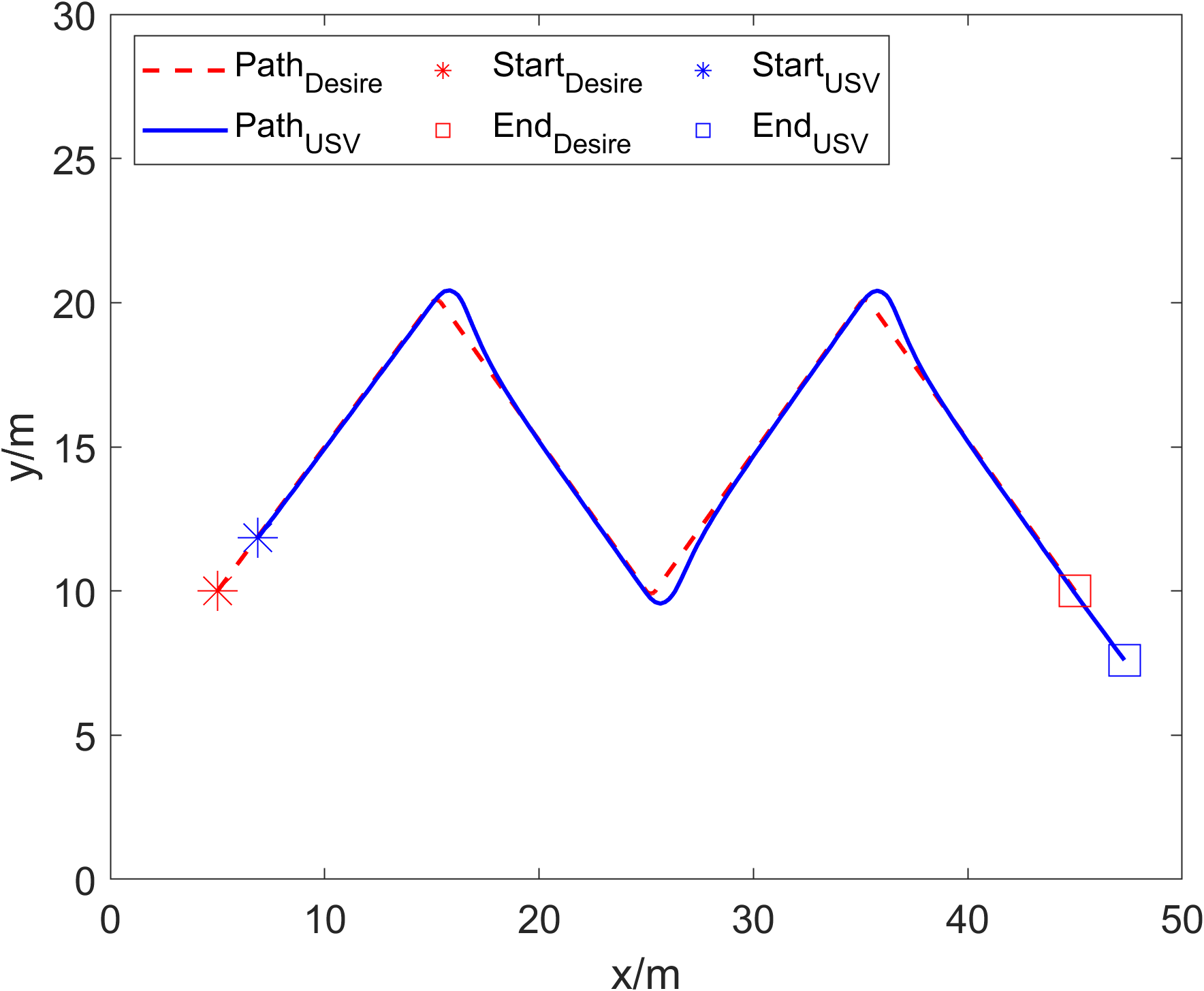}
    }
   \caption{USV path tracking renderings using three different LOS guidance laws in  the simulation experiment.}\label{LOSfangzhen}
\end{figure*}

%实际图
\begin{figure*}
\centering
\includegraphics[width=12cm]{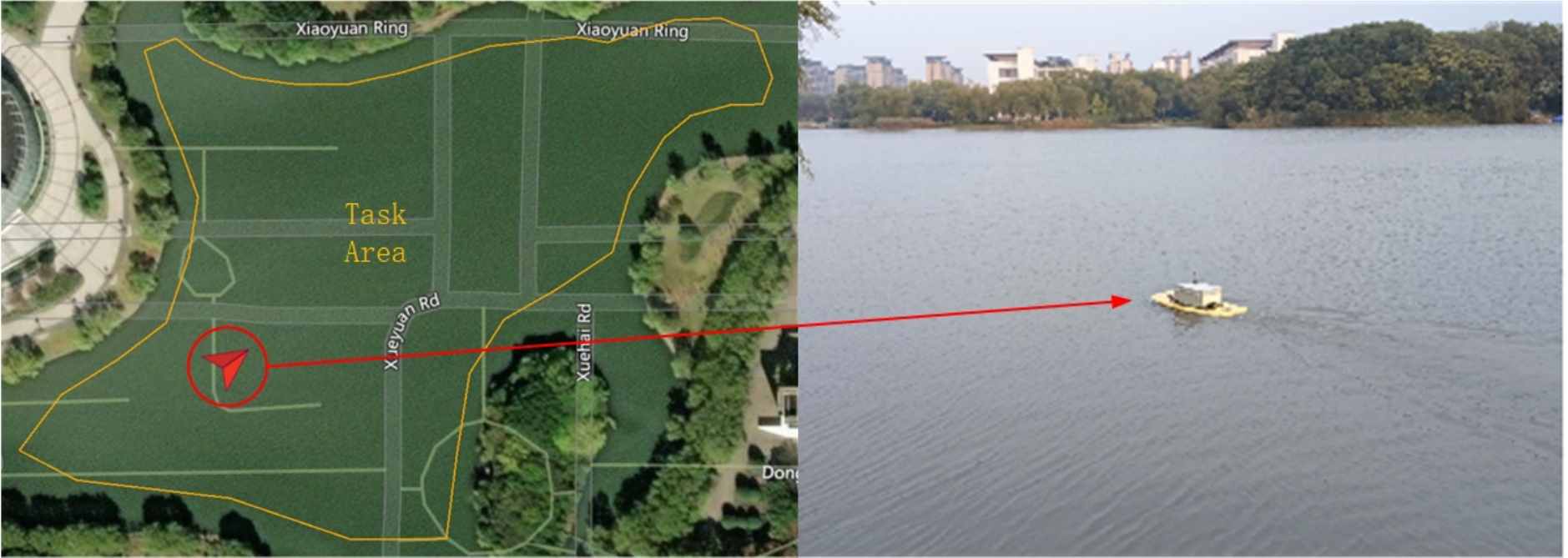}
\caption{Experimental site and USV experimental platform.}
\label{fig11}
\end{figure*}

%加图2 los实验制导图
\begin{figure*}
\centering
    \subfloat[TLOS\label{fig2a}]
    {
     \includegraphics[width=5.5cm]{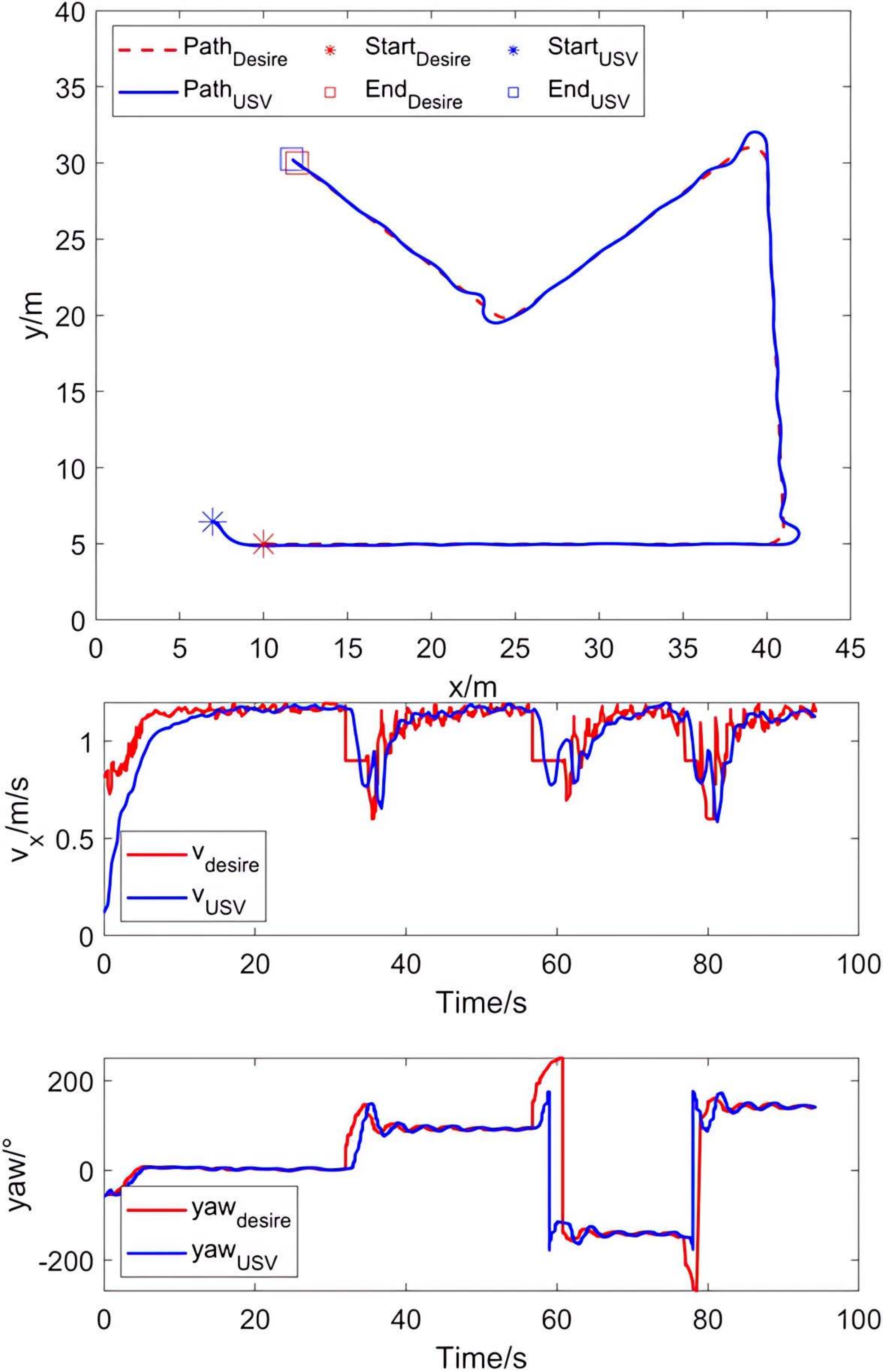}
    }
    \subfloat[VFILOS\centering\label{fig2b}]
    {
     \includegraphics[width=5.5cm]{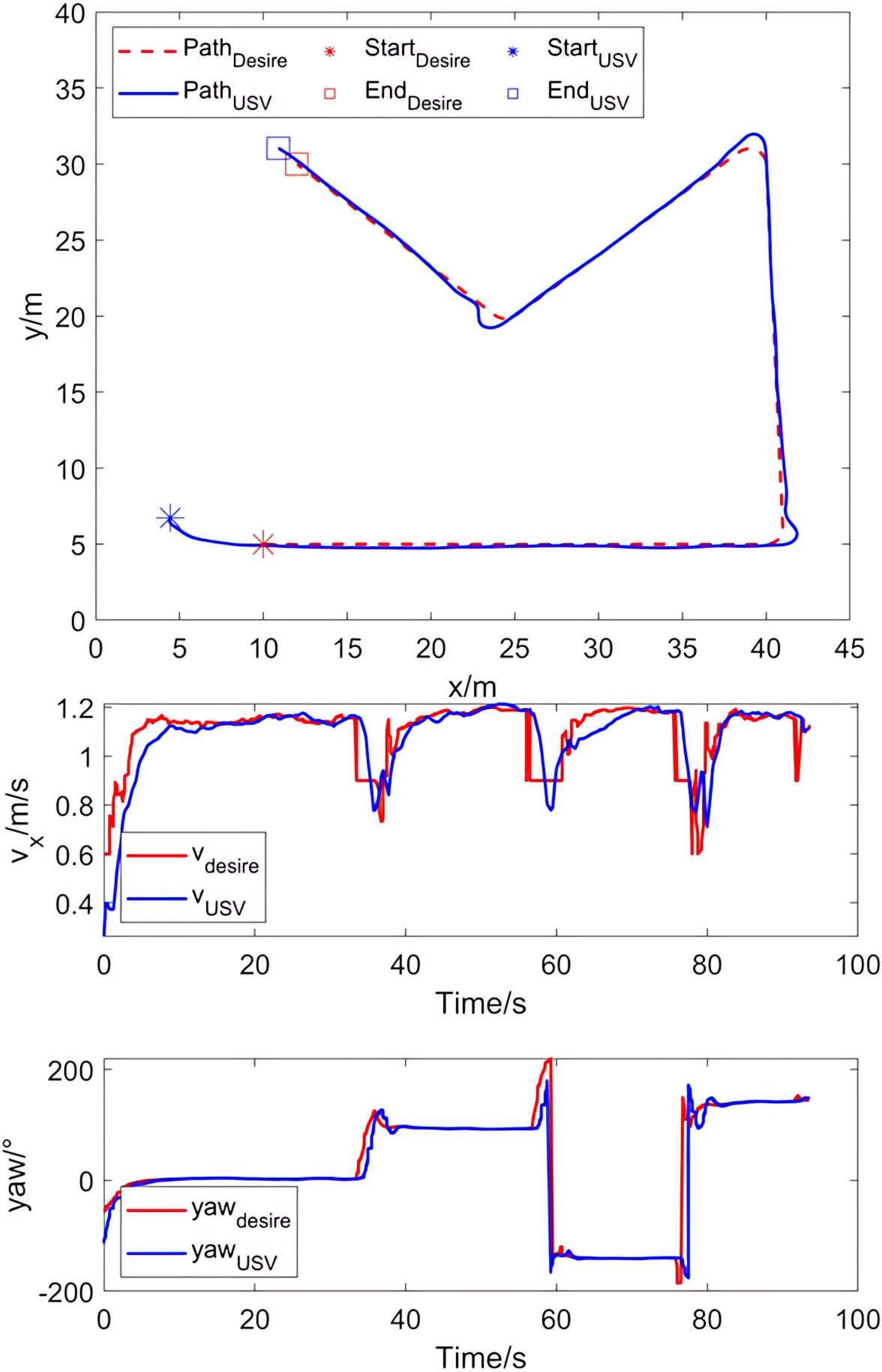}
    }
    \subfloat[VFALOS\centering\label{fig2c}]
    {
     \includegraphics[width=5.5cm]{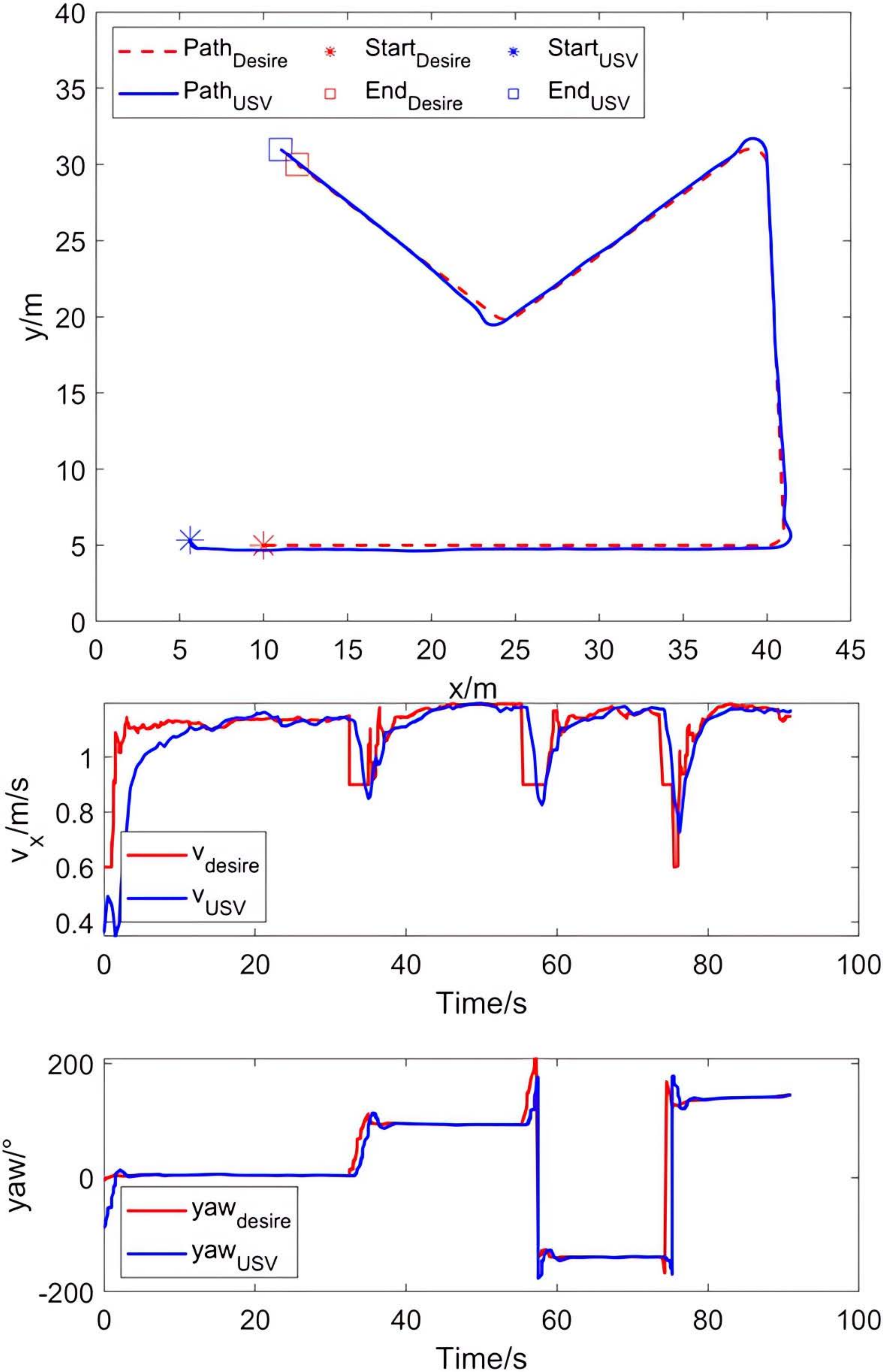}
    }
   \caption{USV path tracking renderings using three different LOS guidance laws in the lake experiment.}\label{fig2}
\end{figure*}

\section{Conclusion}\label{sec5}
In this paper, a guidance law based on the concept of USV vector field is proposed for tracking desired paths. Using $\kappa$-exponential stability, it is demonstrated that the proposed guidance law provides asymptotic tracking on both straight and curved paths. Simulation experiments and lake experiments using ROS verified the effectiveness of the guidance method. The path tracking error is smaller than similar guidance laws for both straight and curved paths. The guidance law also enables the USV to track a given route stably. In the future, we will consider improving the adaptive guidance control of underdriven USVs under strong interference conditions.

% \section{Declaration of competing interest }

% The authors declare that they have no known competing financial interests or personal relationships that could have appeared to influence the work reported in this paper. 

\section{Acknowledgements}

% The research carried out in this paper was mainly supported by the National Natural Science Foundation of China (NSFC) with granted No.52101346, as well as partly supported by the project of NSFC (No.22Z032003162 and No.62173084). Besides, many thanks to the support from the Fundamental Research Funds for the Central Universities of Donghua University.

% To print the credit authorship contribution details
% \printcredits

%% Loading bibliography style file
%\bibliographystyle{model1-num-names}
\bibliography{refs}

\begin{thebibliography}{18}
\expandafter\ifx\csname natexlab\endcsname\relax\def\natexlab#1{#1}\fi
\providecommand{\url}[1]{\texttt{#1}}
\providecommand{\href}[2]{#2}
\providecommand{\path}[1]{#1}
\providecommand{\DOIprefix}{doi:}
\providecommand{\ArXivprefix}{arXiv:}
\providecommand{\URLprefix}{URL: }
\providecommand{\Pubmedprefix}{pmid:}
\providecommand{\doi}[1]{\href{http://dx.doi.org/#1}{\path{#1}}}
\providecommand{\Pubmed}[1]{\href{pmid:#1}{\path{#1}}}
\providecommand{\bibinfo}[2]{#2}
\ifx\xfnm\relax \def\xfnm[#1]{\unskip,\space#1}\fi
%Type = Article
\bibitem[{Barrera et~al.(2021)Barrera, Padron, Luis and Llinas}]{barrera2021trends}
\bibinfo{author}{Barrera, C.}, \bibinfo{author}{Padron, I.}, \bibinfo{author}{Luis, F.}, \bibinfo{author}{Llinas, O.}, \bibinfo{year}{2021}.
\newblock \bibinfo{title}{Trends and challenges in unmanned surface vehicles (usv): From survey to shipping}.
\newblock \bibinfo{journal}{TransNav: International Journal on Marine Navigation and Safety of Sea Transportation} \bibinfo{volume}{15}, \bibinfo{pages}{135--142}.
%Type = Inproceedings
\bibitem[{Borhaug et~al.(2008)Borhaug, Pavlov and Pettersen}]{Borhaug2008}
\bibinfo{author}{Borhaug, E.}, \bibinfo{author}{Pavlov, A.}, \bibinfo{author}{Pettersen, K.Y.}, \bibinfo{year}{2008}.
\newblock \bibinfo{title}{Integral los control for path following of underactuated marine surface vessels in the presence of constant ocean currents}, in: \bibinfo{booktitle}{2008 47th IEEE conference on decision and control}, \bibinfo{organization}{IEEE}. pp. \bibinfo{pages}{4984--4991}.
%Type = Article
\bibitem[{Breivik and Fossen(2009)}]{Breivik2009}
\bibinfo{author}{Breivik, M.}, \bibinfo{author}{Fossen, T.I.}, \bibinfo{year}{2009}.
\newblock \bibinfo{title}{Guidance laws for autonomous underwater vehicles}.
\newblock \bibinfo{journal}{Intelligent Underwater Vehicles} \bibinfo{volume}{4}, \bibinfo{pages}{51--76}.
%Type = Article
\bibitem[{Caharija et~al.(2016)Caharija, Pettersen, Bibuli, Calado, Zereik, Braga, Gravdahl, S{\o}rensen, Milovanovi{\'c} and Bruzzone}]{Caharija2016}
\bibinfo{author}{Caharija, W.}, \bibinfo{author}{Pettersen, K.Y.}, \bibinfo{author}{Bibuli, M.}, \bibinfo{author}{Calado, P.}, \bibinfo{author}{Zereik, E.}, \bibinfo{author}{Braga, J.}, \bibinfo{author}{Gravdahl, J.T.}, \bibinfo{author}{S{\o}rensen, A.J.}, \bibinfo{author}{Milovanovi{\'c}, M.}, \bibinfo{author}{Bruzzone, G.}, \bibinfo{year}{2016}.
\newblock \bibinfo{title}{Integral line-of-sight guidance and control of underactuated marine vehicles: Theory, simulations, and experiments}.
\newblock \bibinfo{journal}{IEEE Transactions on Control Systems Technology} \bibinfo{volume}{24}, \bibinfo{pages}{1623--1642}.
%Type = Article
\bibitem[{Du et~al.(2023)Du, Yang, Chen and Huang}]{du2023improved}
\bibinfo{author}{Du, P.}, \bibinfo{author}{Yang, W.}, \bibinfo{author}{Chen, Y.}, \bibinfo{author}{Huang, S.}, \bibinfo{year}{2023}.
\newblock \bibinfo{title}{Improved indirect adaptive line-of-sight guidance law for path following of under-actuated auv subject to big ocean currents}.
\newblock \bibinfo{journal}{Ocean Engineering} \bibinfo{volume}{281}, \bibinfo{pages}{114729}.
%Type = Article
\bibitem[{Fossen(2023)}]{fossen2023adaptive}
\bibinfo{author}{Fossen, T.I.}, \bibinfo{year}{2023}.
\newblock \bibinfo{title}{An adaptive line-of-sight (alos) guidance law for path following of aircraft and marine craft}.
\newblock \bibinfo{journal}{IEEE Transactions on Control Systems Technology} \bibinfo{volume}{31}, \bibinfo{pages}{2887--2894}.
%Type = Book
\bibitem[{Lefeber(2000)}]{lefeber2000tracking}
\bibinfo{author}{Lefeber, A.A.J.}, \bibinfo{year}{2000}.
\newblock \bibinfo{title}{Tracking control of nonlinear mechanical systems}.
\newblock \bibinfo{publisher}{The Netherlands:Universiteit Twente}.
%Type = Masterthesis
\bibitem[{Lenes(2019)}]{lenes2019autonomous}
\bibinfo{author}{Lenes, J.H.}, \bibinfo{year}{2019}.
\newblock \bibinfo{title}{Autonomous online path planning and path-following control for complete coverage maneuvering of a USV}.
\newblock Master's thesis. NTNU.
%Type = Article
\bibitem[{Liu et~al.(2016)Liu, Wang, Peng and Wang}]{liu2016predictor}
\bibinfo{author}{Liu, L.}, \bibinfo{author}{Wang, D.}, \bibinfo{author}{Peng, Z.}, \bibinfo{author}{Wang, H.}, \bibinfo{year}{2016}.
\newblock \bibinfo{title}{Predictor-based los guidance law for path following of underactuated marine surface vehicles with sideslip compensation}.
\newblock \bibinfo{journal}{Ocean Engineering} \bibinfo{volume}{124}, \bibinfo{pages}{340--348}.
%Type = Article
\bibitem[{Liu et~al.(2020)Liu, Zhang and Wang}]{liu2020adaptive}
\bibinfo{author}{Liu, X.}, \bibinfo{author}{Zhang, M.}, \bibinfo{author}{Wang, S.}, \bibinfo{year}{2020}.
\newblock \bibinfo{title}{Adaptive region tracking control with prescribed transient performance for autonomous underwater vehicle with thruster fault}.
\newblock \bibinfo{journal}{Ocean Engineering} \bibinfo{volume}{196}, \bibinfo{pages}{106804}.
%Type = Article
\bibitem[{Mccue(2016)}]{mccue2016handbook}
\bibinfo{author}{Mccue, L.}, \bibinfo{year}{2016}.
\newblock \bibinfo{title}{Handbook of marine craft hydrodynamics and motion control [bookshelf]}.
\newblock \bibinfo{journal}{IEEE Control Systems Magazine} \bibinfo{volume}{36}, \bibinfo{pages}{78--79}.
%Type = Article
\bibitem[{Nelson et~al.(2007a)Nelson, Barber, McLain and Beard}]{Nelson2007}
\bibinfo{author}{Nelson, D.R.}, \bibinfo{author}{Barber, D.B.}, \bibinfo{author}{McLain, T.W.}, \bibinfo{author}{Beard, R.W.}, \bibinfo{year}{2007}a.
\newblock \bibinfo{title}{Vector field path following for miniature air vehicles}.
\newblock \bibinfo{journal}{IEEE Transactions on Robotics} \bibinfo{volume}{23}, \bibinfo{pages}{519--529}.
%Type = Article
\bibitem[{Nelson et~al.(2007b)Nelson, Barber, McLain and Beard}]{nelson2007vector}
\bibinfo{author}{Nelson, D.R.}, \bibinfo{author}{Barber, D.B.}, \bibinfo{author}{McLain, T.W.}, \bibinfo{author}{Beard, R.W.}, \bibinfo{year}{2007}b.
\newblock \bibinfo{title}{Vector field path following for miniature air vehicles}.
\newblock \bibinfo{journal}{IEEE Transactions on Robotics} \bibinfo{volume}{23}, \bibinfo{pages}{519--529}.
%Type = Article
\bibitem[{Peng et~al.(2021)Peng, Wang, Wang and Han}]{pengusv2021}
\bibinfo{author}{Peng, Z.}, \bibinfo{author}{Wang, J.}, \bibinfo{author}{Wang, D.}, \bibinfo{author}{Han, Q.L.}, \bibinfo{year}{2021}.
\newblock \bibinfo{title}{An overview of recent advances in coordinated control of multiple autonomous surface vehicles}.
\newblock \bibinfo{journal}{IEEE Transactions on Industrial Informatics} \bibinfo{volume}{17}, \bibinfo{pages}{732--745}.
%Type = Article
\bibitem[{Qiu et~al.(2020)Qiu, Wang and Fan}]{qiu2020predictor}
\bibinfo{author}{Qiu, B.}, \bibinfo{author}{Wang, G.}, \bibinfo{author}{Fan, Y.}, \bibinfo{year}{2020}.
\newblock \bibinfo{title}{Predictor los-based trajectory linearization control for path following of underactuated unmanned surface vehicle with input saturation}.
\newblock \bibinfo{journal}{Ocean Engineering} \bibinfo{volume}{214}, \bibinfo{pages}{107874}.
%Type = Article
\bibitem[{Wan et~al.(2020)Wan, Su, Zhang, Shi and AbouOmar}]{wan2020improved}
\bibinfo{author}{Wan, L.}, \bibinfo{author}{Su, Y.}, \bibinfo{author}{Zhang, H.}, \bibinfo{author}{Shi, B.}, \bibinfo{author}{AbouOmar, M.S.}, \bibinfo{year}{2020}.
\newblock \bibinfo{title}{An improved integral light-of-sight guidance law for path following of unmanned surface vehicles}.
\newblock \bibinfo{journal}{Ocean engineering} \bibinfo{volume}{205}, \bibinfo{pages}{107302}.
%Type = Article
\bibitem[{Wang et~al.(2023)Wang, Su, Wu, Fan, Qi, Wang and Feng}]{wang2023vector}
\bibinfo{author}{Wang, M.}, \bibinfo{author}{Su, Y.}, \bibinfo{author}{Wu, N.}, \bibinfo{author}{Fan, Y.}, \bibinfo{author}{Qi, J.}, \bibinfo{author}{Wang, Y.}, \bibinfo{author}{Feng, Z.}, \bibinfo{year}{2023}.
\newblock \bibinfo{title}{Vector field-based integral los path following and target tracking for underactuated unmanned surface vehicle}.
\newblock \bibinfo{journal}{Ocean Engineering} \bibinfo{volume}{285}, \bibinfo{pages}{115462}.
%Type = Inproceedings
\bibitem[{Yuan and Feng(2022)}]{yuan2022study}
\bibinfo{author}{Yuan, X.}, \bibinfo{author}{Feng, Z.}, \bibinfo{year}{2022}.
\newblock \bibinfo{title}{A study of path planning for multi-uavs in random obstacle environment based on improved artificial potential field method}, in: \bibinfo{booktitle}{2022 China Automation Congress (CAC)}, \bibinfo{publisher}{IEEE}. pp. \bibinfo{pages}{5241--5245}.

\end{thebibliography}
\bibliographystyle{cas-model2-names}

% Biography
\bio{}
% Here goes the biography details.
\endbio

\bio{}
% Here goes the biography details.
\endbio

\end{document}